\documentclass[runningheads]{llncs}

\usepackage{amsmath}
\usepackage{amssymb}
\usepackage{mathtools} 
\usepackage{booktabs}
\usepackage{graphicx}
\usepackage{algorithm}
\usepackage{algorithmic}
\usepackage{array}
\usepackage{multirow}
\usepackage{xcolor}
\usepackage[capitalize,noabbrev]{cleveref}
\usepackage{wrapfig}    
\usepackage{float}      
\usepackage{listings}   
\usepackage{tikz}       
\usetikzlibrary{arrows.meta,positioning,calc,shapes.geometric}
\usepackage{enumitem}
\usepackage{pifont}     
\newcommand{\cmark}{\ding{51}}
\newcommand{\xmark}{\ding{55}}
\lstdefinelanguage{Rust}{
  keywords={fn, let, mut, struct, impl, for, while, if, else, match, return, pub, use, mod, self, Self, true, false, in, as, type, where, ref, move, dyn, async, await, const, static, trait, enum, break, continue, loop, unsafe, extern, crate, super, Result, Option, Some, None, Ok, Err, Vec, String, bool, u32, u64, i32, i64, f32, f64, usize, isize, u8},
  keywordstyle=\color{blue!70!black}\bfseries,
  ndkeywords={Layouter, ConstraintSystem, Circuit, Region, Rotation, Cell, ConstraintSystem, Advice, Instance, Fixed},
  ndkeywordstyle=\color{purple!60!black}\bfseries,
  sensitive=true,
  comment=[l]{//},
  morecomment=[s]{/*}{*/},
  morestring=[b]",
  morestring=[b]'
}
\newcommand{\method}{\textsc{NanoZK}}
\newcommand{\prover}{\mathcal{P}}
\newcommand{\verifier}{\mathcal{V}}
\newcommand{\adv}{\mathcal{A}}
\newcommand{\simulator}{\mathcal{S}}
\newcommand{\R}{\mathbb{R}}

\newcommand{\F}{\mathbb{F}}
\newcommand{\bigO}{\mathcal{O}}
\newcommand{\negl}{\text{negl}}
\newcommand{\poly}{\text{poly}}
\DeclareMathOperator{\tr}{tr}

\begin{document}

\title{\textsc{NanoZK}: Privacy-Preserving Verifiable Inference for Large Language Models via Layerwise Zero-Knowledge Proofs}
\titlerunning{\textsc{NanoZK}: Verifiable LLM Inference}
\author{Zhaohui Wang\inst{1}\thanks{\textbf{Extended arXiv version.} The first 12 pages of this document are the ICICS~2026 camera-ready (28th International Conference on Information and Communications Security, Springer LNCS, Fukui, Japan, Oct 27--30, 2026). Appendices~A--H are included \emph{only} in this arXiv version and are not part of the Springer proceedings; they cover the full soundness/ZK proofs, lookup-approximation derivations, Halo2 circuit constructions, prover/verifier pseudocode, extended experimental tables, threat-model details, GPU projection methodology, and reproducibility instructions. An earlier preliminary version appeared at the VerifAI workshop, ICLR~2026~\cite{wang2026nanozk_workshop}; the present paper significantly revises and extends that version with a re-scoped threat model, a precise SHA-256 commitment-chain specification (external SHA-256 + Halo2 IPA public-input commitment, avoiding in-circuit hashing), a verifier-cost analysis, a lookup-table-specific comparison with prior ZKML, an IVC comparison, and end-to-end measurements through $d{=}256$ with GPU-projected scaling to $d{=}768$.}}
\authorrunning{Z. Wang}
\institute{USC Viterbi School of Engineering, Los Angeles, CA, USA \\
\email{zwang000@usc.edu}}

\maketitle

\begin{abstract}
We present \method{}, a zero-knowledge proof system for verifiable LLM inference: clients and third-party auditors check that a provider executed the advertised model on a committed input, without learning weights or activations.
\method{} introduces a \emph{layerwise proof framework} that decomposes transformer inference into independently provable layers linked by a SHA-256 commitment chain, yielding constant-size sub-circuit proofs (3.5--3.7\,KB; $\sim$83\,KB total at $L{=}12$)---comparable in total size to and substantially more parallelisable than prior ZKML's monolithic 101--126\,KB proofs.
We prove compositional soundness and zero-knowledge under standard assumptions, design 16-bit lookup-table approximations for softmax, GELU, and normalisation with measured perplexity degradation below $10^{-4}$ across six model/dataset combinations, and add a Fisher-information-guided \emph{audit-budget triage} as an efficiency tool (full soundness still requires verifying every layer).
On CPU the MLP sub-circuit proves in $\sim$6.3\,s prove-only ($\sim$43\,s setup\,+\,prove) with $\sim$22\,ms verification at any width; attention prove-only time scales from 0.9\,s ($d{=}16$) to 184\,s ($d{=}256$); full-block end-to-end proofs are measured to $d{=}128$, with a $\sim$68\,s/block GPU projection at $d{=}768$ from measured $\bigO(d^2)$ MSM scaling and a conservative $15$--$30\times$ GPU-MSM speedup range (extrapolating Icicle's published $30\times$ at $n\geq2^{20}$ to the smaller $n$ regime).
\emph{Privacy scope:} \method{} hides weights and activations from verifiers and auditors but does not hide the prompt from the prover---this is complementary to HE/MPC.
\keywords{Zero-Knowledge Proofs, Large Language Models, Verifiable Computation, Verifiable Machine Learning}
\end{abstract}

\section{Introduction}
\label{sec:intro}

Users send queries to cloud LLM APIs~\cite{brown2020gpt3,touvron2023llama} and trust that the advertised model actually processed them, with no cryptographic mechanism to verify it. Providers can substitute cheaper models, quantise silently, or return cached outputs---a problem of \emph{computation integrity}, distinct from hiding the prompt from the provider. HE and MPC hide the input but offer no third-party-auditable guarantee that the claimed model executed; zero-knowledge proofs (ZKPs)~\cite{goldwasser1989knowledge} address integrity and also protect weights and activations from verifiers and auditors.

\textbf{Privacy scope.} \method{} hides model weights, intermediate activations, and (optionally) the user input from the \emph{verifier} and any third-party auditor. It does \emph{not} hide the input from the prover: the provider observes the prompt in cleartext, as in any standard cloud LLM API. Hiding the prompt from the provider is the orthogonal goal of HE/MPC; \method{} composes with such techniques but does not subsume them.

ZKPs on LLM inference face two challenges: a GPT-2~\cite{radford2019language} pass has $>10^8$ multiplications, and softmax/GELU/LayerNorm have no direct arithmetic-circuit representation. Prior ZKML restricts to small models (EZKL~\cite{ezkl2023}), requires A100s (zkLLM~\cite{sun2024zkllm}), or takes ${\sim}1\,$h on CPU (ZKML~\cite{kang2024zkml}).

We present \method{}, with measured CPU proofs up to $d{=}256$ (attention) and $d{=}128$ (full block) and projected GPU performance at GPT-2 scale. Our three contributions are: (i)~a \textbf{layerwise proof framework} decomposing transformer inference into independently provable layer proofs linked by cryptographic commitment chains, with constant 3.2--3.7\,KB sub-circuit proofs (totalling $\sim$83\,KB at $L{=}12$); (ii)~\textbf{ZK-friendly 16-bit lookup tables} for softmax, GELU/SiLU, and RMSNorm with perplexity drift below $10^{-4}$ (the limit of our measurement precision); (iii)~\textbf{Fisher information-guided audit-budget triage}---an efficiency tool, not a cryptographic primitive; Fisher scores correlate with perturbation impact at Spearman $\rho{=}0.916$ at $\sigma{=}0.05$ ($p<10^{-4}$).

\textbf{Target deployment regimes.} \method{} targets minute-scale-proving settings: high-stakes synchronous decisions (medical, legal, regulatory attestation), asynchronous audit/logging, and sampling-based attestation. Real-time conversational use is out of scope and complementary to TEE/MPC approaches. Soundness is maintained at $\epsilon \approx 3\times 10^{-37}$ (Theorem~\ref{thm:soundness}).
Concretely: provider publishes a model commitment $c_W$; auditor supplies~$x$, receives $(y,\Pi)$, and accepts iff every $\pi_\ell$ verifies against the announced $c_W$ and the chain digests match.

\section{Background and Threat Model}
\label{sec:background}

\subsection{Threats and Goals}

LLM-as-a-Service exposes two integrity-side risks that prompt this work~\cite{tramer2016stealing}: \emph{model substitution}, where a provider silently swaps in a cheaper model, applies aggressive quantization, or returns cached outputs; and \emph{computation forgery}, where the returned $y$ does not actually correspond to the advertised $f_\theta(x)$.
Today these are deterred only contractually.

We use the standard transformer notation throughout: $h_\ell \in \R^{n \times d}$ is the activation after layer $\ell$, with $h_0 = \text{Embed}(x)$, $h_\ell = \text{TransformerBlock}_\ell(h_{\ell-1})$, $y = \text{LMHead}(\text{Norm}(h_L))$~\cite{vaswani2017attention,touvron2023llama}.
Each block contains attention and an MLP with residual connections; non-arithmetic operations (softmax, GELU/SiLU, RMSNorm) are the parts that resist direct circuit compilation.

\subsection{Zero-Knowledge Proofs}
\label{sec:zkp-background}

A ZK proof system $(\prover, \verifier)$ for relation $\mathcal{R}$ satisfies completeness, soundness, and zero-knowledge~\cite{goldwasser1989knowledge}.
We instantiate \method{} on Halo2 with the Inner Product Argument (IPA) commitment scheme~\cite{halo2,bunz2018bulletproofs}: no trusted setup, $\bigO(\log n)$ proof size, and discrete-log security---pairing-free and transparent, unlike pairing-based alternatives~\cite{groth16}, though DLog-based IPA is not post-quantum secure.

\subsection{Threat Model}
\label{sec:threat}

\begin{definition}[Adversary and Security Goals]
\label{def:threat}
The adversary $\adv$ is a PPT-bounded, adaptive malicious provider that controls the inference infrastructure and may substitute the model, apply unannounced quantization/pruning, fabricate outputs, or forge proofs; in the \emph{full-verification regime} $\adv$ may observe the entire verifier protocol before choosing which layers to corrupt, and is constrained only by standard cryptographic assumptions (discrete-log hardness, collision-resistant hashing).
Model identity is bound by a public Merkle root $c_W$ over canonical $W_{1:L}$ serialisation, supplied as a public input to every layer circuit and checked against the announced identity by the verifier.
\emph{Integrity:} for any PPT $\adv$ producing $(\pi^*, y^*, c_W)$ with $y^* \neq f_W(x)$, $\Pr[\verifier(\pi^*, x, y^*, c_W)=\text{accept}] \leq \negl(\lambda)$.
\emph{Verifier-facing privacy:} weights $W_\ell$ and activations $h_\ell$ stay hidden from verifiers and auditors; $x$ is committed via $c_0 = \mathcal{H}(\text{Embed}(x))$ and may be a private witness (auditor sees only $c_0$) or a public input.
The triage of \S\ref{sec:fisher} is a strict relaxation: it adds the assumption that the layer-selection strategy is hidden from $\adv$ (or that probabilistic detection is acceptable) and is presented as an efficiency tool, not a security claim.
\end{definition}

\textbf{Out of scope.}
\method{} does \emph{not} hide $x$ from the prover: the provider executes inference on $x$ in cleartext exactly as in a standard cloud LLM API.
Hiding $x$ from the provider is the complementary goal of HE~\cite{gilad2016cryptonets,juvekar2018gazelle} and MPC~\cite{mohassel2017secureml,riazi2018chameleon}; \method{} composes with such techniques but does not replace them.
We also do not address DoS, side-channels, training verification, or attacks on the model itself (adversarial examples, data poisoning).

\section{Layerwise Proof Framework}
\label{sec:layerwise}

The central challenge in ZK-proving LLM inference is scale: a GPT-2-Small forward pass involves $\sim$124M parameters, and a monolithic circuit yields $>10^9$ constraints with $\bigO(N\log N)$ prover time plus $\bigO(n\cdot d)$ cross-layer routing constraints per boundary that serve no computational purpose.
The central observation is that transformer inference has inherent \emph{layer independence}: each block depends only on the previous layer's output and its own parameters, enabling decomposition into independently provable units linked by cryptographic commitments.
Each layer circuit has at most $\max_\ell n_\ell$ constraints, reducing peak memory by a factor of $L$ and eliminating all cross-layer routing.
Figure~\ref{fig:architecture} shows the resulting pipeline.

\begin{figure}[!htbp]
\centering
\includegraphics[width=\linewidth]{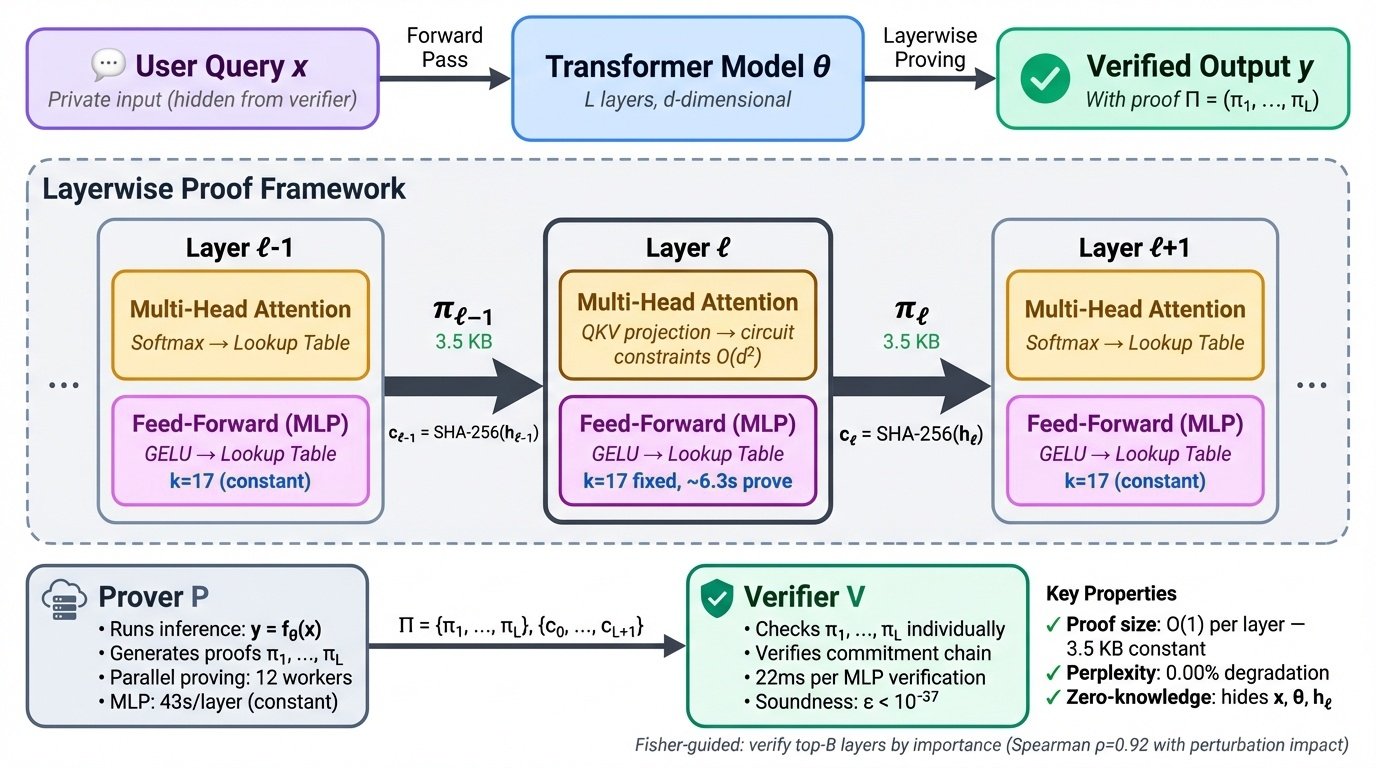}
\caption{System architecture. Input $x$ flows through $L$ transformer layers, each decomposed into independently provable Attention and MLP sub-circuits with 16-bit lookup-table approximations. Each sub-circuit emits a constant-size proof ($\sim$3.5\,KB) linked by a SHA-256 commitment chain $c_0 \to c_1 \to \cdots \to c_{L+1}$; the Verifier checks individual proofs and chain integrity.}
\label{fig:architecture}
\end{figure}

\subsection{Proof Decomposition}

\textbf{Layer proof.} For transformer layer $\ell$ with input $h_{\ell-1} \in \R^{n \times d}$, output $h_\ell$, and parameters $W_\ell$, a layer proof $\pi_\ell$ demonstrates $h_\ell = f_\ell(h_{\ell-1}; W_\ell)$ where $f_\ell$ denotes the layer computation (attention + FFN + residuals), and includes commitments $c_{\ell-1} = \mathcal{H}(h_{\ell-1})$, $c_\ell = \mathcal{H}(h_\ell)$ where $\mathcal{H}$ is collision-resistant (SHA-256 in our implementation). For presentation we write $c_\ell = \mathcal{H}(h_\ell)$; Appendix~\ref{app:soundness} formalises this as $c_\ell = \mathcal{H}(c_{\ell-1} \,\|\, \mathrm{enc}(B_\ell) \,\|\, \ell)$, where $B_\ell$ is an in-circuit-opened algebraic commitment to the private activation $h_\ell$.

\textbf{Where the chain hash lives.}
\method{} computes $c_\ell$ \emph{externally} on the host and supplies it to layer circuit~$\ell$ as a \emph{public input}, avoiding the ${\sim}25$K-constraint penalty of in-circuit SHA-256~\cite{halo2}. Conceptually we write the chain marker as $c_\ell = \mathcal{H}(h_\ell)$; in the formal protocol (Appendix~\ref{app:soundness}) the host hashes the public encoding of an algebraic boundary commitment $B_\ell$ to the activation, i.e.\ $c_\ell = \mathcal{H}(c_{\ell-1} \,\|\, \mathrm{enc}(B_\ell) \,\|\, \ell)$, while the layer circuit proves, via the Halo2 IPA polynomial commitment, that its private witness $h_\ell$ opens to $B_\ell$. The same external-commit pattern binds weights: $c_W$ is a Merkle root over canonical $W_{1:L}$ serialisation, also a public input, with a Merkle opening of $W_\ell$ enforced in each sub-circuit. The composite proof $\Pi = (\pi_0, \pi_1, \ldots, \pi_{L+1})$ forms a chain $c_\ell^{\text{out}}(\pi_\ell) = c_\ell^{\text{in}}(\pi_{\ell+1})$ for all $\ell \in [L]$.

\subsection{Soundness and Zero-Knowledge}
\label{sec:soundness}

\begin{lemma}[Chain Binding]
\label{lem:chain-binding}
For two accepting composite proofs $\Pi, \Pi'$ with $c_0 = c'_0$ that yield different intermediate states at some layer, $\Pr[\text{both verify}] \leq (L{+}2)\cdot\negl(\lambda)$.
\end{lemma}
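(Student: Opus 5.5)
The plan is a reduction combined with a union bound over the $L{+}2$ proofs $\pi_0,\dots,\pi_{L+1}$ in the composite. Suppose $\Pi$ and $\Pi'$ both verify, share $c_0=c'_0$, and differ in their intermediate states. Let $\ell^\ast$ be the \emph{first} index at which the underlying private activations differ, i.e.\ $h_{\ell^\ast}\neq h'_{\ell^\ast}$ while $h_j=h'_j$ for all $j<\ell^\ast$ (with $h_{-1}$ read as the committed input). We define a failure event $E_\ell$ for each position $\ell\in\{0,\dots,L{+}1\}$ and show that both proofs verifying with first divergence at $\ell$ implies $E_\ell$ or a hash collision. Each event will have probability $\negl(\lambda)$, and the union bound then gives $(L{+}2)\cdot\negl(\lambda)$.

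\textbf{Step 1: the chain digests agree.} Starting from $c_0=c'_0$, induct along the chain using the formal marker $c_\ell=\mathcal{H}(c_{\ell-1}\,\|\,\mathrm{enc}(B_\ell)\,\|\,\ell)$ and the verifier's check $c^{\text{out}}_\ell(\pi_\ell)=c^{\text{in}}_\ell(\pi_{\ell+1})$. If $c_\ell=c'_\ell$ but the preimages differ, we have a SHA-256 collision, which costs $\negl(\lambda)$ per position. So up to collision events, $B_j=B'_j$ for every $j$.

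\textbf{Step 2: boundary binding up to $\ell^\ast-1$.} Each $B_j$ is a Halo2 IPA polynomial commitment that layer circuit $j$ proves opens to its private $h_j$. Binding of Pedersen/IPA commitments under discrete log, together with knowledge soundness of Halo2, lets us extract openings from both accepting $\pi_j$ and $\pi'_j$. Since $B_j=B'_j$, two distinct openings would break DLog. Hence the inputs to layer $\ell^\ast$ coincide.

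\textbf{Step 3: divergence at $\ell^\ast$.} Both $\pi_{\ell^\ast}$ and $\pi'_{\ell^\ast}$ are accepted for the same input commitment, the same weight root $c_W$ (Merkle binding fixes $W_{\ell^\ast}$, again up to a collision), and the same output commitment $B_{\ell^\ast}$. By the determinism of $f_{\ell^\ast}$, which holds because the lookup tables are fixed functions, the extracted outputs must be equal. So either the knowledge extractor fails (the circuit relation is violated) or commitment binding is broken. In both cases event $E_{\ell^\ast}$ occurs with probability at most $\negl(\lambda)$, the soundness error of a single Halo2 proof.

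\textbf{Main obstacle.} The delicate step is justifying extraction for $L{+}2$ proofs sequentially without the extraction costs compounding. Halo2 IPA is only argued knowledge-sound in the ROM/AGM, so rewinding each proof separately could in principle blow up the running time. My first attempt would avoid rewinding entirely: reduce directly to binding of $B_\ell$ and to per-proof soundness of the relation ``$\exists$ an opening of $B_{\ell-1}$ and of $B_\ell$ satisfying $f_\ell$,'' and argue each position independently by guessing $\ell^\ast$. Guessing $\ell^\ast$ uniformly loses only a factor of $L{+}2$, which is exactly the factor appearing in the statement.
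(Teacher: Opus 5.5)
There is a genuine gap, in your Step~1. Steps~2--3 rely on it. Collision resistance of $c_\ell=\mathcal{H}(c_{\ell-1}\,\|\,\mathrm{enc}(B_\ell)\,\|\,\ell)$ only runs \emph{backward}: $c_\ell=c'_\ell$ forces $(c_{\ell-1},B_\ell)=(c'_{\ell-1},B'_\ell)$, but $c_{\ell-1}=c'_{\ell-1}$ tells you nothing about $c_\ell$ versus $c'_\ell$. The check $c^{\text{out}}_\ell(\pi_\ell)=c^{\text{in}}_\ell(\pi_{\ell+1})$ is internal to one bundle and relates nothing across $\Pi$ and $\Pi'$. So from $c_0=c'_0$ you get $B_0=B'_0$ and nothing more.

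Here is a concrete counterexample to your intermediate claim. Take two honest bundles on the same input that reuse $r_0$ but sample fresh blinding scalars $r_1\neq r'_1$. Both verify, yet $B_1\neq B'_1$ and $c_1\neq c'_1$. Hence the ``since $B_j=B'_j$'' in Step~2 and the ``same input commitment \dots\ same output commitment $B_{\ell^\ast}$'' in Step~3 are unsupported. A warning sign: if Step~1 were true, commitment binding alone would finish the proof and your Step~3 would be unnecessary.

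The repair is your own Step~3 mechanism, applied inductively instead of through the chain.
\begin{itemize}
\item \emph{Base case:} $B_0=B'_0$ plus binding gives $h_0=h'_0$.
\item \emph{Inductive step:} extract from $\pi_\ell$ and $\pi'_\ell$. \emph{Within} each bundle, the input opening extracted from $\pi_\ell$ and the output opening extracted from $\pi_{\ell-1}$ open the same public point $B_{\ell-1}$, so binding identifies them. This intra-bundle binding is what is actually needed, not cross-bundle equality. Merkle binding to the common $c_W$ fixes $W_\ell$, and knowledge soundness plus determinism of $f_\ell$ then gives $h_\ell=h'_\ell$.
\end{itemize}
Equality of $B_\ell$ across bundles is never used.

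This proves more than the paper's proof of the lemma does. The paper gives only the binding half: at a position where $c_\ell=c'_\ell$, distinct $h_\ell\neq h'_\ell$ force a SHA-256 collision or an IPA-binding break, union-bounded over the $L{+}2$ positions. The first-divergence/layer-soundness argument is deferred to Theorem~\ref{thm:soundness}.

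Finally, the fallback in your last paragraph fails. $\mathrm{Com}$ is perfectly hiding: for every pair $(B_{\ell-1},B_\ell)$ and every $h$ there are scalars $r,r'$ with $B_{\ell-1}=\mathrm{Com}(h;r)$ and $B_\ell=\mathrm{Com}(f_\ell(h);r')$. So the statement ``$\exists$ openings of $B_{\ell-1},B_\ell$ satisfying $f_\ell$'' is always true, and plain soundness of that relation is vacuous. Knowledge soundness, as in hybrid $\mathsf{H}_1$ of the proof of Theorem~\ref{thm:full-soundness}, cannot be avoided.
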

\emph{Proof.} At any layer $\ell$, accepting proofs with $c_\ell = c'_\ell$ but $h_\ell \neq h'_\ell$ require either a SHA-256 collision or an IPA-binding break (each $\leq \negl(\lambda)$ under discrete-log hardness); union-bound over the $L{+}2$ positions.

\begin{theorem}[Compositional Soundness]
\label{thm:soundness}
With per-layer soundness $\epsilon_\ell$ and collision-resistant $\mathcal{H}$, the composite system has soundness error $\epsilon_{\text{total}} \leq \sum_{\ell=0}^{L+1} \epsilon_\ell + 2(L{+}2)\cdot\negl(\lambda)$.
\end{theorem}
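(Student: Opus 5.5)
The plan is a hybrid/event-decomposition argument. Fix a PPT adversary $\adv$ that outputs $(\Pi^*, y^*, c_W)$ with $y^* \neq f_W(x)$ and $\Pi^* = (\pi_0^*,\dots,\pi_{L+1}^*)$ accepted. Let $h_0 = \text{Embed}(x)$, let $h_\ell = f_\ell(h_{\ell-1}; W_\ell)$ be the honest trajectory, and let $c_\ell$ be the honest chain digests. From the accepted $\Pi^*$ we read off the public boundary data: digests $c_\ell^*$, boundary commitments $B_\ell^*$, and (via the per-layer extractor/soundness notion) the claimed activations $h_\ell^*$ opened inside circuit $\ell$. Because $c_0$ is fixed by $x$ and $y^*$ differs from the honest output, the claimed trajectory must diverge from the honest one. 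Let $\ell^\star$ be the \emph{first} index at which $\pi^*_{\ell^\star}$ fails to witness the true relation. That is, either its input is not the honest $h_{\ell^\star-1}$, or its output is not $f_{\ell^\star}$ applied to that input under $W_{\ell^\star}$.

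Next I partition the bad event into three disjoint cases.

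\textbf{(a) A statement-falsity event.} Some $\pi^*_\ell$ verifies although its public statement $(c_{\ell-1}^*, c_\ell^*, c_W)$ has no valid witness, i.e. there is no $(h_{\ell-1},h_\ell,W_\ell)$ consistent with the openings, the Merkle path and $f_\ell$. By per-layer soundness this has probability at most $\epsilon_\ell$. A union bound over $\ell = 0,\dots,L+1$ gives the sum term.

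\textbf{(b) A boundary-substitution event.} Every statement is true, yet two adjacent proofs open the shared digest $c_\ell^*$ to different activations. This is the chain-mismatch situation of Lemma~\ref{lem:chain-binding}. It requires a SHA-256 collision on $c_\ell = \mathcal{H}(c_{\ell-1}\|\mathrm{enc}(B_\ell)\|\ell)$ or an IPA binding break on $B_\ell$. Invoking the lemma, or its proof pointwise, bounds this by $(L{+}2)\negl(\lambda)$.

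\textbf{(c) A weight-substitution event.} Some $\pi^*_\ell$ uses $W'_\ell \neq W_\ell$ while its Merkle opening still verifies against $c_W$. This reduces to a collision for $\mathcal{H}$ along the Merkle path, again contributing at most $\negl(\lambda)$ per position, so $(L{+}2)\negl(\lambda)$ in total. Cases (b) and (c) together account for the $2(L{+}2)\negl(\lambda)$ term.

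Given these cases, the final step is an induction on $\ell$. Outside events (a)--(c), every accepted $\pi^*_\ell$ witnesses a true statement whose input is the honest $h_{\ell-1}$ (by the inductive hypothesis plus binding) and whose weights are $W_\ell$. Hence its output equals $h_\ell$, and at the end $y^* = f_W(x)$, contradicting the assumption. So $\Pr[\text{accept}] \le \Pr[(a)\cup(b)\cup(c)]$, and the union bound yields the theorem.

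The main obstacle is making case (b) rigorous. Per-layer soundness only guarantees that \emph{some} witness exists, not that the witness used by layer $\ell$ and the one used by layer $\ell+1$ coincide. So I would phrase per-layer soundness as knowledge soundness and run the extractors for $\pi^*_\ell$ and $\pi^*_{\ell+1}$. This yields openings of the same $B_\ell^*$, and any disagreement between them becomes a binding break. The reduction must remain PPT and must absorb the extraction failure probability into $\epsilon_\ell$. I would also check that running $L{+}2$ extractors does not force rewinding blowups beyond polynomial; in the Fiat--Shamir random-oracle setting, a straight-line extraction argument or independent per-layer extraction should suffice.
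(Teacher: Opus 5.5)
Your proposal is correct and is essentially the paper's proof: the first-divergence argument of the body sketch, made rigorous as in Appendix~\ref{app:soundness}. There, per-layer IPA knowledge extraction (your switch from plain to knowledge soundness is exactly Hybrid~$\mathsf{H}_1$) yields two openings of each shared $B_\ell$, commitment binding and SHA-256 collision resistance force them to agree, and the extracted witnesses compose into an honest execution. The only difference is how you split the $2(L{+}2)\cdot\negl(\lambda)$ term: you charge it to chain binding plus Merkle collisions, whereas the paper charges it to commitment binding plus SHA-256 collisions, with the Merkle check folded into the latter; the resulting bound is the same.
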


\emph{Proof sketch.}
At the first divergence layer $\ell^*$, either $\pi_{\ell^*}^*$ proves a false relation or two distinct witnesses share $c_{\ell^*}$---a Lemma~\ref{lem:chain-binding} event. Assuming $\epsilon_\ell\leq 2^{-\lambda}$, $\epsilon_{\text{total}}\leq 3(L{+}2)\cdot 2^{-\lambda}$; at $\lambda{=}128$, $L{=}32$, $\epsilon_{\text{total}}\leq 102\cdot 2^{-128}\approx 3\times 10^{-37}$.
\textbf{Zero-knowledge.} Each layer admits a simulator $\simulator_\ell(c_{\ell-1}, c_\ell)$; sequential composition yields $\simulator$ indistinguishable from $\Pi$. Hiding of the chain $\{c_\ell\}$ holds under the random-oracle modelling of $\mathcal{H}$ provided each $h_\ell$ retains $\omega(\log\lambda)$ bits of min-entropy conditioned on $\adv$'s view---inherited whenever the prompt distribution does.

\subsection{Complexity Analysis}

Each layer proof has size $|\pi_\ell| = \bigO(\log n_\ell)$, independent of $d$, and is 3.2--3.7\,KB ($\sim$6.9\,KB per layer up to $d{=}768$). Layer independence enables parallel proving: with $P=L{+}2$ workers, wall-clock is $T_{\text{forward}}+\max_\ell T_{\text{prove}}(\ell)$. The SHA-256 chain is 32\,B per layer at $\sim$192\,$\mu$s per hash---negligible against proving time.

\subsection{Proof Generation and Verification}

Algorithm~\ref{alg:layerwise} summarises the protocol. The forward pass is sequential; proving and verification are embarrassingly parallel across all $L{+}2$ sub-circuits; no SHA-256 invocation appears inside any sub-circuit.

\begin{algorithm}[!htbp]
\caption{Layerwise Proof Generation and Verification (schematic; Appendix~\ref{app:soundness} gives the boundary-commitment version used for the formal proof).}
\label{alg:layerwise}
\begin{algorithmic}[1]
\STATE \textbf{Prover.} \emph{Witness:} $W_{1{:}L}, x$. \emph{Public:} $c_W, \Pi, y, \{c_0,\ldots,c_{L+1}\}$.
\STATE \quad Forward+commit: $h_0{\gets}\text{Embed}(x)$, $c_0{\gets}\mathcal{H}(h_0)$; for $\ell{=}1{:}L$: $h_\ell{\gets}\text{Block}_\ell(h_{\ell-1})$, $c_\ell{\gets}\mathcal{H}(h_\ell)$; $y{\gets}\text{LMHead}(\text{Norm}(h_L))$, $c_{L+1}{\gets}\mathcal{H}(y)$.
\STATE \quad In parallel for $\ell\in\{0,\ldots,L{+}1\}$: $\pi_\ell\gets\text{Prove}_\ell(\text{wit}: h_{\ell-1},h_\ell,W_\ell;\, \text{pub}: c_W, c_{\ell-1},c_\ell)$, each circuit additionally enforcing the opening of $W_\ell$ against $c_W$.
\STATE \textbf{Verifier.} \emph{Input:} $c_W, \Pi, y, \{c_\ell\}$.
\STATE \quad Check $c_W$ against the announced model identity. For each $\ell$: check $\text{Verify}_\ell(\pi_\ell;c_W,c_{\ell-1},c_\ell)$ and $c_\ell^{\text{out}}(\pi_\ell)=c_\ell^{\text{in}}(\pi_{\ell+1})$; check $c_{L+1}=\mathcal{H}(y)$. \textsc{accept} iff all pass.
\end{algorithmic}
\end{algorithm}

\section{ZK-Friendly Approximations}
\label{sec:approx}

Softmax exponentials, GELU/SiLU error functions, and RMSNorm/LayerNorm divisions have no direct arithmetic-circuit form. Polynomial approximations accumulate error; we instead use 16-bit lookup tables whose end-to-end accuracy impact is below our measurement precision ($\Delta\text{PPL}<10^{-4}$, \S\ref{sec:verifier-cost}). For each function $f$, $\text{LUT}_f = \{(i, f(x_i)) : i\in[0,2^{16})\}$ over its operating range (e.g.\ $[-8,8]$ for softmax exp) is enforced in-circuit by Plookup~\cite{halo2}: one constraint per lookup. We cover softmax (in-circuit division via $a=b\cdot c$), GELU/SiLU, and RMSNorm rsqrt (sum-of-squares in-circuit, rsqrt looked up); LayerNorm reduces to RMSNorm plus mean subtraction.

\subsection{Approximation Error and Comparison}
\label{sec:approx-error}

Table~\ref{tab:approx-error} reports per-operation errors measured over 100{,}000 sampled points (softmax measured end-to-end over 496{,}000 output probabilities, sequence lengths 8--128).

\begin{table}[!htbp]
\centering
\caption{Lookup table approximation errors (16-bit precision, real measurements).}
\label{tab:approx-error}
\small
\begin{tabular}{@{}lccccc@{}}
\toprule
\textbf{Function} & \textbf{Mean Rel.} & \textbf{P99 Rel.} & \textbf{Mean Abs.} & \textbf{Table Size} & \textbf{Constraints} \\
\midrule
GELU & 0.026\% & 0.158\% & $4.1 {\times} 10^{-5}$ & 65,536 & 1 per lookup \\
SiLU & 0.021\% & 0.158\% & $6.5 {\times} 10^{-5}$ & 65,536 & 1 per lookup \\
Softmax (e2e)$^\dagger$ & --- & --- & $7.2 {\times} 10^{-6}$ & 65,536 & 1 per lookup \\
RMSNorm (rsqrt) & 0.003\% & 0.034\% & $7.5 {\times} 10^{-5}$ & 65,536 & 1 per lookup \\
\bottomrule
\end{tabular}

\smallskip
\raggedright\scriptsize $^\dagger$Softmax relative error is ill-defined for near-zero probabilities; we report absolute error instead. Mean absolute error of $7.2{\times}10^{-6}$ corresponds to negligible output distortion.
\end{table}

\textbf{Error Propagation Through Layers.}
A key concern is whether per-operation errors compound when all layers use lookup approximations.
On a 12-layer transformer ($d{=}256$), KL divergence between exact and approximated outputs remains below $10^{-8}$ even with all 12 layers approximated, and L2 RMS error grows sub-linearly from $5{\times}10^{-5}$ (1 layer) to $1.4{\times}10^{-4}$ (12 layers).
This sub-linear accumulation confirms that residual connections absorb per-layer errors, explaining the zero perplexity degradation observed end-to-end.

\subsection{Comparison with Lookup-Based Approximations in Prior ZKML}
\label{sec:lut-comparison}

Several prior ZKML systems also handle non-arithmetic operations via lookups or tabulated polynomial approximations.
We compare design choices and the resulting accuracy in Table~\ref{tab:lut-compare}.

\begin{table}[!htbp]
\centering
\caption{Lookup/approximation strategies in ZKML systems (numbers cited from each paper unless noted). LN = LayerNorm; sm = softmax.}
\label{tab:lut-compare}
\small
\setlength{\tabcolsep}{3pt}
\begin{tabular}{@{}l l c l c@{}}
\toprule
\textbf{System} & \textbf{Strategy} & \textbf{Prec.} & \textbf{Ops covered} & \textbf{$\Delta$PPL} \\
\midrule
zkCNN~\cite{zhang2018zkcnn} & no LUT & --- & none (CNN) & --- \\
ZKML~\cite{kang2024zkml} & Halo2 LUT & mixed & ReLU, GELU & ${<}0.5\%$ \\
zkLLM~\cite{sun2024zkllm} & tlookup/zkAttn & 8-bit & sm, GELU & ${<}0.5\%$ \\
zkGPT~\cite{qu2025zkgpt} & GKR poly. & --- & sm, GELU, LN & ${\sim}0.05$--$0.5\%$ \\
\midrule
\textbf{\method{}} & Plookup LUT & \textbf{16-bit} & sm, GELU, SiLU, RMSNorm & $\boldsymbol{<10^{-4}}$ \\
\bottomrule
\end{tabular}
\end{table}

Compared to prior lookup-based ZKML, \method{} uses 16-bit precision tables (256$\times$ more entries than zkLLM's 8-bit tlookup) at the same per-lookup circuit cost (one Plookup constraint), absorbing the residual $10^{-5}$-scale errors via the transformer's residual structure (\S\ref{sec:approx-error}) for end-to-end $\Delta\text{PPL}<10^{-4}$ across three models and two datasets (\S\ref{sec:experiments}).

\section{Fisher-Guided Audit-Budget Triage}
\label{sec:fisher}

\begin{wraptable}{r}{0.50\linewidth}
\vspace{-1.2em}
\centering
\caption{Fisher-guided vs.\ random layer selection at 50\% budget (WikiText-2).}
\label{tab:fisher}
\small
\setlength{\tabcolsep}{2pt}
\begin{tabular}{@{}lrrrr@{}}
\toprule
\textbf{Model} & $L$ & \textbf{Fisher} & \textbf{Random} & $\Delta$ (pp) \\
\midrule
GPT-2 & 12 & 64.7\% & 54.3\% & +10.4 \\
TinyLLaMA & 22 & 86.0\% & 79.3\% & +6.7 \\
Phi-2 & 32 & 62.4\% & 50.6\% & +11.8 \\
\midrule
\multicolumn{4}{r}{\textbf{Average:}} & \textbf{+9.6} \\
\bottomrule
\end{tabular}
\vspace{-1.0em}
\end{wraptable}
\begin{sloppypar}
Under the relaxed regime of Definition~\ref{def:threat} (auditor keeps the layer-selection strategy hidden from $\adv$, or accepts probabilistic detection), an audit with budget $B<L{+}2$ benefits from verifying the most sensitive layers first. We use the scalar score $I_\ell = \tr(\mathcal{F}_\ell)/|\theta_\ell|$, where $\mathcal{F}_\ell$ is the per-layer Fisher information~\cite{fisher1925theory,kirkpatrick2017overcoming}, and verify the top-$B$. On GPT-2, Fisher scores correlate with Gaussian-perturbation impact ($\rho{=}0.888$ at $\sigma{=}0.01$, $\rho{=}0.916$ at $\sigma{=}0.05$, $p<10^{-4}$). At 50\% budget, Fisher covers +9.6\,pp more total sensitivity than random (Table~\ref{tab:fisher}); rankings are stable across WikiText-2/C4/Pile-v2 ($\rho{=}0.55$--$0.95$).
\end{sloppypar}


\section{Experiments}
\label{sec:experiments}

\textbf{Setup.}
We evaluate on GPT-2 variants ($d\in\{64,128,256,512,768\}$, up to GPT-2-Small 124M), plus GPT-2-Medium and TinyLLaMA-1.1B~\cite{touvron2023llama} for accuracy, on a single Intel Xeon CPU at 2.4\,GHz with 64\,GB RAM (no GPU). The Halo2 IPA backend is in Rust over the Pallas scalar field with 16-bit fixed-point LUTs; perplexity uses WikiText-2~\cite{merity2016pointer} and C4 (seq.\ 128). EZKL baselines on the same CPU: 19.3\,s/26.0\,KB for MLP ($d{=}64$) and 19.5\,s/39.5\,KB for attention ($d{=}16$); EZKL beyond these dimensions exceeds 64\,GB during circuit compilation. The polynomial baseline of \S\ref{sec:verifier-cost} is a degree-7 minimax Chebyshev fit over the same operating range as our LUTs.

\subsection{End-to-End Proof Performance}

Figure~\ref{fig:proving-scaling}(a) reports proving-time scaling for attention and MLP sub-circuits.

\begin{figure}[!htbp]
\centering
\includegraphics[width=\linewidth]{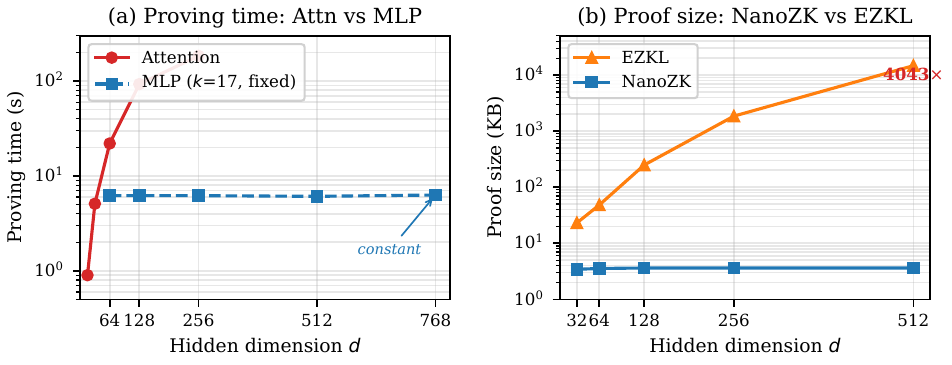}
\caption{(a)~Proving time: attention sub-circuit prove-only grows with $d$ while MLP stays constant at $\sim$6.3\,s (fixed degree $k{=}17$). (b)~Per-sub-circuit proof size: \method{} stays constant at $\sim$3.6\,KB; EZKL's monolithic proof grows to $>$14\,MB at $d{=}512$ (EZKL stops there due to circuit-compilation OOM on 64\,GB).}
\label{fig:proving-scaling}
\end{figure}

We highlight three results.
First, \textbf{proof size is constant}: each sub-circuit (attention or MLP) produces a 3.2--3.7\,KB proof regardless of dimension.
This property, inherited from the IPA commitment scheme, means proof size does not leak information about model architecture beyond what is explicitly disclosed.
Second, \textbf{MLP proving time is constant} at $\sim$6.3\,s (proving only) across $d{=}64$ to $d{=}768$, because the circuit degree $k{=}17$ is fixed---constraint rows are pre-allocated, and smaller models simply leave rows unused.
Third, \textbf{attention proving time scales} with $d$ because the attention circuit's degree $k$ must grow to accommodate $\bigO(d^2)$ constraints from the QKV projections and score matrix.
Table~\ref{tab:attn-scaling} reports real attention proof times alongside end-to-end full-block (Attn+MLP) proofs measured up to $d{=}128$ with a GPU projection at GPT-2 scale.

\begin{table}[!htbp]
\centering
\caption{Attention sub-circuit scaling and full transformer block end-to-end proof time (real Halo2 IPA, prove-only; setup is amortisable across queries with the same circuit shape). $^\dagger$Extrapolated to GPT-2 scale; $^\ddagger$GPU-projected via Table~\ref{tab:comparison-gpu}. Block E2E at $d{=}256$ is omitted: the combined attention+MLP circuit exceeds 64\,GB during witness generation on our CPU.}
\label{tab:attn-scaling}
\small
\begin{tabular}{@{}rrrrrrr@{}}
\toprule
$d$ & $k$ & \textbf{Setup (s)} & \textbf{Attn Prove (s)} & \textbf{Verify (ms)} & \textbf{Block E2E (s)} & \textbf{Proof (KB)} \\
\midrule
16 & 14 & 3.3 & 0.9 & 13 & 6.1 & 5.2 \\
32 & 17 & 29.0 & 5.1 & 95 & 12.3 & 5.3 \\
64 & 19 & 122.8 & 22.0 & 365 & 39.5 & 5.5 \\
128 & 21 & 526.8 & 92.9 & 1{,}747 & 94.2 & 5.7 \\
256 & 22 & 1{,}018 & 184.3 & 3{,}396 & --- & 5.8 \\
768$^\dagger$ & 24 & --- & $\sim$25$^\ddagger$ & --- & $\sim$68$^\ddagger$ & $\sim$6.9 \\
\bottomrule
\end{tabular}
\end{table}

Attention proving is the scalability bottleneck. At $d{=}768$, MLP prove-only is 6.3\,s ($\sim$26$\times$ NumPy) and setup+prove is 43\,s for a single query; the two figures differ only by whether setup is amortised across queries with the same circuit shape. 12-layer GPT-2-Small projects to $\sim$14\,min sequentially. The measured CPU parallel speedup at $L{+}2$ workers is 2.66$\times$, capped by shared-L3 contention between MSMs; a discrete-GPU implementation with one MSM per device avoids this and we project $\sim$2\,min at 12 GPU workers, an L3-shared GPU would inherit the 2.66$\times$ cap.

\subsection{Comparison and GPU Projection}

\begin{wraptable}{r}{0.50\linewidth}
\vspace{-1.2em}
\centering
\caption{Comparison with recent ZKML systems at GPT-2 scale, prior work as reported. ZKML, zkGPT on 32-thread CPU; zkLLM on A100; \method{} on single CPU. $^\dagger$MLP-only setup+prove; full block at $d{=}768$ projects to ${\sim}68$\,s/block.}
\label{tab:comparison-gpu}
\small
\setlength{\tabcolsep}{3pt}
\begin{tabular}{@{}lccl@{}}
\toprule
\textbf{System} & \textbf{Prove} & \textbf{Size} & \textbf{$\Delta$PPL} \\
\midrule
ZKML~\cite{kang2024zkml} & 67\,min & 7.8\,KB & ${<}0.5\%$ \\
zkLLM~\cite{sun2024zkllm} & 15.8\,s & 126\,KB & ${<}0.5\%$ \\
zkGPT~\cite{qu2025zkgpt} & 21.8\,s & 101\,KB & ${<}0.5\%$ \\
\method{} & 43\,s$^\dagger$\,/\,68\,s & \textbf{6.9\,KB} & $\boldsymbol{<10^{-4}}$ \\
\bottomrule
\end{tabular}
\vspace{-1.0em}
\end{wraptable}
\begin{sloppypar}
Table~\ref{tab:comparison-gpu} compares \method{} with prior ZKML systems using their reported numbers; no direct same-hardware comparison is possible since zkGPT and zkLLM have no public implementation at writing.
\method{}'s per-sub-circuit proof (3.5\,KB) is the smallest reported; the composite at $L{=}12$ is $\sim$83\,KB, $\sim$$1.2$--$1.5\times$ smaller than zkLLM/zkGPT's monolithic 101--126\,KB, while enabling parallel proving, independent storage, and selective re-verification. EZKL's monolithic proofs grow to $>$14\,MB at $d{=}512$ on the same CPU. Icicle~\cite{icicle2023} reports $30$--$50\times$ MSM speedup on A100 at $n\geq2^{20}$ with diminishing returns at smaller $n$; applied to our $\bigO(d^2)$ CPU baseline at a conservative $15$--$30\times$ effective speedup over $k\in\{20,22,24\}$, this projects (not measured) to $\sim$25\,s for attention at $d{=}768$ and $\sim$68\,s per layer (Table~\ref{tab:attn-scaling}), $\sim$2\,min for 12-layer GPT-2 with 12 GPU workers.
\end{sloppypar}

\subsection{Verifier Cost and Accuracy}
\label{sec:verifier-cost}

\textbf{Verifier wall-clock.}
MLP verification is constant at $\sim$22\,ms; attention verification scales 13\,ms ($d{=}16$) to 3.4\,s ($d{=}256$) (Table~\ref{tab:attn-scaling}); digest checks are $<$1\,$\mu$s each. A measured 12-layer verifier ($d\leq 256$) is 41\,s sequential / 4\,s with 12 threads; at $d{=}768$ the projection is $\sim$6\,min sequential or $\sim$30\,s with 12 threads. Verification asymptotically matches a monolithic IPA verifier and recovers parallelism it cannot offer.

\begin{sloppypar}
\textbf{Accuracy.}
We report relative perplexity change $\Delta\text{PPL}$ under lookup vs.\ polynomial approximations on three models (GPT-2, GPT-2-Medium, TinyLLaMA) and two datasets (WikiText-2, C4; seq.\ 128). Lookup-table $\Delta\text{PPL}$ is below our measurement precision ($\!<10^{-4}$) in all six combinations; the degree-7 polynomial baseline (\S\ref{sec:experiments}~Setup) degrades PPL by 106--136\%---much wider than the 0.05--0.5\% of co-designed polynomial systems like zkGPT~\cite{qu2025zkgpt}, where activation range and polynomial degree are jointly tuned; our naive same-range fit deliberately does not match this. The $10^{-5}$-scale quantisation errors (Table~\ref{tab:approx-error}) are absorbed by the residual structure described above.
\end{sloppypar}

\section{Related Work}
\label{sec:related}

\textbf{Zero-Knowledge Machine Learning.}
EZKL~\cite{ezkl2023} compiles ONNX to PLONK but struggles beyond 1M parameters.
ZKML~\cite{kang2024zkml} achieves GPT-2 proofs via Halo2 compilation ($\sim$1\,h CPU).
zkLLM~\cite{sun2024zkllm} (tlookup/zkAttn, 15\,min on A100) and zkGPT~\cite{qu2025zkgpt} (GKR, sub-25\,s) also use per-layer commitments---the pattern is not new. \method{}'s specific contribution is pairing \emph{external} SHA-256 chain commitments with Halo2 IPA public-input binding for both $h_\ell$ and $W_\ell$, avoiding ${\sim}25$K in-circuit hashing constraints per layer in exchange for IPA binding, and yielding constant 3.2--3.7\,KB per-sub-circuit proofs that are independently verifiable and selectively re-checkable. Noir~\cite{noir2022}, Plonky2~\cite{plonky22022}, ZEXE~\cite{bowe2018zexe} are general toolchains; zkCNN~\cite{zhang2018zkcnn} targets CNNs.

\textbf{Privacy-Preserving ML and Verifiable Computation.}
HE~\cite{gilad2016cryptonets,juvekar2018gazelle} protects inputs but offers no integrity; MPC~\cite{mohassel2017secureml,riazi2018chameleon} needs non-collusion; DP~\cite{dwork2014algorithmic} protects training statistics. SNARKs~\cite{groth16} and STARKs~\cite{ben2019aurora} face scalability at LLM scale; Bulletproofs~\cite{bunz2018bulletproofs} have linear verification. \method{} delivers privacy and integrity under a single cryptographic-hardness assumption and sidesteps the constraint-system scaling barrier.

\textbf{Recursive proofs and IVC.}
Folding-based IVC (Nova~\cite{nova2022}) is the natural complement: a Nova fold over $\Pi$ would compress the $\sim$83\,KB composite into a single proof, at the cost of an in-circuit recursive verifier and a uniform circuit shape. \method{}'s heterogeneous, parallelisable sub-proofs occupy the opposite point of the design space and the two compose cleanly; with low-rank/kernelised attention~\cite{wang2020linformer,choromanski2021performer}, the layerwise-then-fold pattern is a plausible path to verifiable inference at 7\,B and beyond.

\textbf{Conclusion.}
\method{}'s layerwise composition gives constant 3.2--3.7\,KB sub-circuit proofs, $\epsilon\approx3\times10^{-37}$ soundness, and ZK at minute-scale CPU wall-clock (6.3\,s MLP measured; $\sim$25\,s attention at $d{=}768$ \emph{projected} via GPU MSM); ZK-friendlier attention + Nova-style folding extends the design to LLM scale.
\emph{Acknowledgments.} Thanks to ICICS reviewers for feedback that improved the privacy claim, commitment-chain semantics, and IVC comparison.

\bibliographystyle{splncs04}
\bibliography{references}

\newpage
\appendix

\begin{center}
\Large\bfseries Appendices (arXiv-only extended material)
\end{center}
\medskip

\noindent The remainder of this document is an extended technical companion to the 12-page ICICS~2026 camera-ready that precedes it. The appendices are included only in this arXiv preprint, not in the Springer LNCS proceedings. Cross-references to body sections (e.g.~Section~\ref{sec:layerwise}) point back into the camera-ready; appendix-internal references are prefixed \texttt{app:}.

\medskip
\noindent\textbf{Roadmap.}
\begin{itemize}[topsep=2pt,itemsep=1pt]
\item Appendix~\ref{app:soundness}: full soundness and zero-knowledge proofs.
\item Appendix~\ref{app:lookup}: lookup-table approximation construction and error propagation.
\item Appendix~\ref{app:circuits}: Halo2 circuit constructions and chip designs.
\item Appendix~\ref{app:algorithms}: prover/verifier pseudocode, Fiat-Shamir spec, audit-budget triage.
\item Appendix~\ref{app:experiments}: extended experimental tables and ablations.
\item Appendix~\ref{app:threat}: per-mode threat-model breakdown.
\item Appendix~\ref{app:gpu}: GPU projection methodology and measured/projection separation.
\item Appendix~\ref{app:repro}: reproducibility checklist, hardware/software versions, build instructions.
\end{itemize}

\section{Full Soundness and Zero-Knowledge Proofs}
\label{app:soundness}

This appendix expands Theorem~\ref{thm:soundness} and the ZK-simulator sketch of Section~\ref{sec:soundness}. Concretely, we (i)~make precise the algebraic mechanism by which the SHA-256 commitment chain binds \emph{private} activations $h_\ell$ via an in-circuit-checked algebraic boundary commitment $B_\ell$, (ii)~give the full hybrid argument with explicit loss terms, and (iii)~exhibit a polynomial-time ZK simulator under the random-oracle model.

\subsection{Preliminaries and Notation}
\label{app:soundness:prelim}

We fix a security parameter $\lambda$ and work over the Pallas scalar field $\F_q$ with $q \approx 2^{255}$; the prime-order curve is $\mathbb{G}_{\mathrm{Pallas}}$. The proof system $(\prover,\verifier)$ is Halo2 with the Inner Product Argument (IPA)~\cite{halo2,bunz2018bulletproofs}; the discrete-log assumption is taken in $\mathbb{G}_{\mathrm{Pallas}}$. Throughout this appendix, $L$ denotes the number of transformer blocks, $H = \mathrm{SHA\text{-}256}$, $\mathrm{IV}$ is a fixed initial chain value, and $c_W$ denotes the publicly announced model commitment (Merkle root over canonical $W_{1{:}L}$ serialisation; see Section~\ref{sec:layerwise}). We index layers $\ell \in \{0, 1, \ldots, L{+}1\}$, where layer~$0$ is the embedding step ($h_0 = \mathrm{Embed}(x; W_{\mathrm{emb}})$) and layer~$L{+}1$ is the LM head ($y = \mathrm{LMHead}(\mathrm{Norm}(h_L); W_{\mathrm{head}})$). This matches body Algorithm~\ref{alg:layerwise}, which constructs $\Pi = (\pi_0, \pi_1, \ldots, \pi_{L+1})$ with $c_0 = \mathcal{H}(\mathrm{Embed}(x))$ and $c_{L+1} = \mathcal{H}(y)$.

\paragraph{Boundary commitments: clarifying the binding mechanism.}
A technical clarification not made explicit in the body: \emph{the chain binds activations via an algebraic, in-circuit-checkable boundary commitment $B_\ell$, not directly via the SHA-256 digest $H(h_\ell)$.} Concretely, fix Halo2-IPA's generator vector $\mathbf G$ and a blinding generator $U$; for $h_\ell \in \F_q^{N}$ ($N \coloneqq nd$) and uniform $r_\ell \xleftarrow{\$} \F_q$, define
\[
B_\ell \;\coloneqq\; \mathrm{Com}(h_\ell;\, r_\ell) \;=\; \langle h_\ell, \mathbf G\rangle + r_\ell\,U \;\in\; \mathbb{G}_{\mathrm{Pallas}}.
\]
$B_\ell$ is a single Pallas curve point ($\approx 64$\,B serialised), is computationally binding under DLog and perfectly hiding given uniform $r_\ell$, and is exactly the object Halo2-IPA commits to internally when the activation is laid out as an advice polynomial. We expose this commitment as a \emph{public-input curve point}; the layer circuit enforces (via an IPA opening gate) that the prover's advice column for $h_\ell$ opens to the publicised $B_\ell$. The body's notation $c_\ell = \mathcal{H}(h_\ell)$ is the chain marker associated with layer~$\ell$; its formal binding to private $h_\ell$ proceeds through $B_\ell$, as made precise below.

\paragraph{Chain digests on public bytes.}
With $B_\ell$ in hand, the SHA-256 chain is defined over \emph{public bytes} $\mathrm{enc}(B_\ell)$:
\[
c_{-1} \coloneqq \mathrm{IV}, \qquad c_\ell \;\coloneqq\; H\!\big(c_{\ell-1} \,\|\, \mathrm{enc}(B_\ell) \,\|\, \ell\big), \qquad \ell = 0, 1, \ldots, L{+}1.
\]
Each layer proof $\pi_\ell$ takes $(c_W, B_{\ell-1}, B_\ell, \ell)$ as Halo2 public inputs, and the layer relation $\mathcal{R}_\ell$ enforces (in-circuit) openings of advice columns to $B_{\ell-1}$ and $B_\ell$ together with the per-layer transformer relation (embedding for $\ell{=}0$, transformer block for $\ell \in \{1,\ldots,L\}$, LM head for $\ell{=}L{+}1$) and a Merkle opening of the relevant weight slice against $c_W$. SHA-256 is computed by the verifier externally on the public-input bytes and is never instantiated inside the constraint system.

\subsection{Theorem A.1 (Compositional Soundness)}
\label{app:soundness:thm}

\begin{theorem}[Full Statement]
\label{thm:full-soundness}
Let $\Pi = \big(c_W,\; (B_\ell)_{\ell=0}^{L+1},\; (c_\ell)_{\ell=0}^{L+1},\; (\pi_\ell)_{\ell=0}^{L+1},\; y\big)$ be a layerwise proof bundle and let $x$ be the public input. Assume:
\begin{enumerate}[label=(\roman*),topsep=2pt,itemsep=1pt]
\item Halo2-IPA satisfies $\epsilon_{\mathrm{IPA}}$-knowledge soundness against PPT adversaries under DLog over $\mathbb{G}_{\mathrm{Pallas}}$;
\item the Pedersen-style boundary commitment $\mathrm{Com}$ is $\epsilon_{\mathrm{bind}}$-binding under DLog;
\item SHA-256 is $(t,\epsilon_H)$-collision-resistant for $t = \poly(\lambda)$;
\item each layer relation $\mathcal{R}_\ell$ encodes faithful transformer arithmetic for that layer (embedding for $\ell{=}0$, block for $\ell \in \{1,\ldots,L\}$, LM head for $\ell{=}L{+}1$), in-circuit openings to $B_{\ell-1}$ and $B_\ell$, and a Merkle opening of the relevant weights against $c_W$.
\end{enumerate}
The verifier accepts iff (a)~$\pi_\ell$ verifies against $(c_W, B_{\ell-1}, B_\ell, \ell)$ for every $\ell$, (b)~$c_\ell = H(c_{\ell-1} \,\|\, \mathrm{enc}(B_\ell) \,\|\, \ell)$ for every $\ell$ with $c_{-1} = \mathrm{IV}$, and (c)~$y$ is consistent with the opening committed by $B_{L+1}$ (since $y$ is public, this is checked by re-committing $y$ with the prover-supplied $r_{L+1}$ and comparing).

Then for every PPT $\adv$,
\[
\Pr\!\left[
\begin{array}{c}
\verifier\text{ accepts }\Pi \;\wedge\; \big(\,h_0 \ne \mathrm{Embed}(x; W_{\mathrm{emb}})\\[2pt]
\quad\text{or}\;\, \exists\,\ell{\in}\{1,\dots,L\}:\; h_\ell \ne \mathrm{Block}_\ell^{W_\ell}(h_{\ell-1})\\[2pt]
\quad\text{or}\;\, y \ne \mathrm{LMHead}(\mathrm{Norm}(h_L); W_{\mathrm{head}})\,\big)
\end{array}
\right]
\;\le\; (L{+}2)\,(\epsilon_{\mathrm{IPA}} + \epsilon_{\mathrm{bind}} + \epsilon_H).
\]
\end{theorem}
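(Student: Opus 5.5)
We argue by a reduction built on knowledge extraction followed by a first-divergence analysis, which mirrors the body sketch of Theorem~\ref{thm:soundness} but makes the loss terms explicit. Fix a PPT $\adv$ that outputs an accepting bundle $\Pi$. For each $\ell\in\{0,\dots,L{+}1\}$ we invoke the knowledge extractor $\mathcal{E}_\ell$ guaranteed by assumption~(i) on $\pi_\ell$ with public inputs $(c_W,B_{\ell-1},B_\ell,\ell)$. Except with probability $\epsilon_{\mathrm{IPA}}$ per layer, it returns a witness $(h^{\mathrm{in}}_{\ell},r^{\mathrm{in}}_\ell,h^{\mathrm{out}}_\ell,r^{\mathrm{out}}_\ell,W_\ell,\text{Merkle path})$ satisfying $\mathcal{R}_\ell$. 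By assumption~(iv), such a witness has three properties: $\mathrm{Com}(h^{\mathrm{in}}_\ell;r^{\mathrm{in}}_\ell)=B_{\ell-1}$, $\mathrm{Com}(h^{\mathrm{out}}_\ell;r^{\mathrm{out}}_\ell)=B_\ell$, and $h^{\mathrm{out}}_\ell$ equals the correct layer map applied to $h^{\mathrm{in}}_\ell$ under weights $W_\ell$ that open against $c_W$. For $\ell=0$ the input is the public $x$; for $\ell=L{+}1$ the output is tied to the public $y$ via check~(c). We define the adversary's ``claimed'' activations as $h_\ell \coloneqq h^{\mathrm{out}}_\ell$. The bad event in the theorem is then exactly the event that this extracted chain is not an honest forward pass.

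The core step is \emph{boundary consistency}. For each $\ell\in\{0,\dots,L\}$, both $\pi_\ell$ and $\pi_{\ell+1}$ are verified against the \emph{same} public point $B_\ell$, so $h^{\mathrm{out}}_\ell$ and $h^{\mathrm{in}}_{\ell+1}$ are two openings of one commitment. If $h^{\mathrm{out}}_\ell\neq h^{\mathrm{in}}_{\ell+1}$, a reduction outputs both openings and breaks binding, so by assumption~(ii) this happens with probability at most $\epsilon_{\mathrm{bind}}$ per boundary. Check~(c) handles the last boundary in the same way: the re-commitment of $y$ with $r_{L+1}$ equals $B_{L+1}$, so $y\neq h^{\mathrm{out}}_{L+1}$ would again be a binding break. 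The SHA-256 chain enters as follows. The verifier recomputes $c_\ell$ from $\mathrm{enc}(B_\ell)$ by check~(b). If the bundle were compared with any previously fixed digest sequence, for example the announced $c_0$ as in Lemma~\ref{lem:chain-binding}, then two distinct $(c_{\ell-1},B_\ell)$ pairs with the same $c_\ell$ give a collision. This costs at most $\epsilon_H$ per position and pins each $B_\ell$ to the transcript.

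Conditioned on none of these $3(L{+}2)$ failure events, we induct from $\ell=0$ upward. The extracted $h_0$ equals $\mathrm{Embed}(x;W_{\mathrm{emb}})$. Each extracted input $h^{\mathrm{in}}_\ell$ equals the previous extracted output, so $h_\ell=\mathrm{Block}^{W_\ell}_\ell(h_{\ell-1})$. Finally, $y=\mathrm{LMHead}(\mathrm{Norm}(h_L))$. This contradicts the bad event. A union bound over the at most $L{+}2$ applications of each of extraction, binding, and hashing gives $(L{+}2)(\epsilon_{\mathrm{IPA}}+\epsilon_{\mathrm{bind}}+\epsilon_H)$. Here we assign one binding event to each of the $L{+}2$ boundaries $B_0,\dots,B_{L+1}$, since $B_{-1}$ is empty.

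The main obstacle is the extraction step. Running $L{+}2$ extractors on one adversary, where the extractors may rewind, does not obviously compose. We would handle this with a per-layer wrapper adversary $\adv_\ell$ that runs $\adv$ and outputs only $\pi_\ell$ together with its public inputs, and then apply knowledge soundness to each $\adv_\ell$ separately. The witnesses extracted at adjacent layers are then compared in a joint experiment whose cost is bounded by the binding reduction. A further subtlety is making precise the sense in which the SHA-256 term is actually needed. We would state it as ensuring that the $B_\ell$ the verifier uses are uniquely determined by the digests, since the binding argument itself uses $B_\ell$ directly.
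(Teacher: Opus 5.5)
Your skeleton matches the paper's hybrid sequence $\mathsf{H}_1$--$\mathsf{H}_4$: per-layer extraction, binding of the two openings of each shared $B_\ell$ (plus $y$ against $B_{L+1}$ via check~(c)), induction from $\ell=0$, and a union bound over $3(L{+}2)$ events. Your per-layer wrappers, run on one shared execution of $\adv$, handle extractor composition more carefully than the paper's bare union bound.

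\textbf{The missing step is weight binding.} You only use that the extracted $W_\ell^\ast$ ``opens against $c_W$'', so your induction yields $h_\ell=\mathrm{Block}_\ell^{W_\ell^\ast}(h_{\ell-1})$. The bad event, however, concerns the weights that $c_W$ actually commits to. That is how the paper's $\mathsf{H}_4$ reads it (``each $W_\ell^\ast$ is uniquely determined by $c_W$''), and it is what detecting model substitution requires. So you still need $W_\ell^\ast=W_\ell$, and only collision resistance gives it. Suppose the extracted slice differs from the committed one while both authenticate against the same root at the leaf position that $\mathcal{R}_\ell$ fixes for layer $\ell$. Then walking up the two authentication paths yields a hash collision. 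Without this step your argument only shows that \emph{some} weights authenticated by $c_W$ were used, not the announced ones.

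This also answers the question in your last paragraph. You are right that check~(b) does no work in the single-bundle experiment. There $\adv$ chooses both $B_\ell$ and $c_\ell$, there is no reference digest sequence, and the binding step uses the public $B_\ell$ directly. So the paper's $\mathsf{H}_3$ charges $(L{+}2)\epsilon_H$ for an event the argument never needs to exclude, and mentions the Merkle pool only as ``folded into the same constant''.

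The hash budget is better spent on weight binding. Charge one Merkle-collision event per layer $\ell\in\{0,\dots,L{+}1\}$; each reduction runs $\adv$ and $\mathcal{E}_\ell$ and compares against the committed slice. This contributes at most $(L{+}2)\epsilon_H$ and yields the stated bound with complete accounting. This presumes the Merkle tree is built from a hash covered by assumption~(iii). If the in-circuit opening instead uses a circuit-friendly hash, its collision resistance must be assumed separately. With this step added your proof is complete, and its loss accounting is tighter than the paper's.
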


\begin{proof}[Proof by hybrid argument]
We track loss across $L{+}2$ layers ($\ell = 0,\ldots,L{+}1$).

\textbf{Hybrid $\mathsf{H}_0$.} The real experiment: $\adv$ outputs $\Pi$ and $\verifier$ accepts.

\textbf{Hybrid $\mathsf{H}_1$ (IPA extraction).} For each $\ell$, run the Halo2-IPA knowledge extractor on $\pi_\ell$. With probability $\ge 1 - \epsilon_{\mathrm{IPA}}$ it returns a witness $w_\ell^* = (h_{\ell-1}^*, h_\ell^*, W_\ell^*, r_{\ell-1}^*, r_\ell^*, \text{Merkle-path}^*)$ satisfying $\mathcal{R}_\ell(c_W, B_{\ell-1}, B_\ell, \ell;\, w_\ell^*)$. Union bound: loss $\le (L{+}2)\,\epsilon_{\mathrm{IPA}}$.

\textbf{Hybrid $\mathsf{H}_2$ (boundary-commitment binding).} For each adjacent pair $\ell$ and $\ell{+}1$, the extracted witnesses provide two openings of the \emph{same} public $B_\ell$: $(h_\ell^*, r_\ell^*)$ from $\pi_\ell$'s output side and $(h_\ell^{**}, r_\ell^{**})$ from $\pi_{\ell+1}$'s input side. If $h_\ell^* \ne h_\ell^{**}$ then $\adv$ has two distinct openings of $B_\ell$, violating commitment-binding (which reduces to DLog). With $L{+}1$ adjacent pairs, this contributes $\le (L{+}2)\,\epsilon_{\mathrm{bind}}$ (the slightly larger constant absorbs the boundary check $y \leftrightarrow B_{L+1}$).

\textbf{Hybrid $\mathsf{H}_3$ (chain collision-resistance).} The verifier's checks $c_\ell = H(c_{\ell-1} \,\|\, \mathrm{enc}(B_\ell) \,\|\, \ell)$ tie the published $(c_0, \ldots, c_{L+1})$ to the published $(B_0, \ldots, B_{L+1})$; any change of $B_\ell$ preserving $c_\ell$ is a SHA-256 collision. Across $L{+}2$ edges this contributes $\le (L{+}2)\,\epsilon_H$. The Merkle-root check on $c_W$ contributes another collision-resistance pool, folded into the same constant.

\textbf{Hybrid $\mathsf{H}_4$.} After $\mathsf{H}_{1\text{--}3}$: $h_\ell^* = h_\ell^{**}$ for every $\ell$ (boundary commitments are binding), each $h_\ell^*$ is uniquely determined by the public $B_\ell$, each $W_\ell^*$ is uniquely determined by $c_W$, and each $\mathcal{R}_\ell$ enforces the per-layer transformer relation. Composing across $\ell$ yields an honest end-to-end transformer execution, contradicting the bad-event predicate.

Summing the three loss pools gives the stated bound. \qed
\end{proof}

\paragraph{Numerical instantiation.}
With $\lambda = 128$, $\epsilon_{\mathrm{IPA}} \le 2^{-128}$, $\epsilon_{\mathrm{bind}} \le 2^{-128}$ (DLog hardness on Pallas), $\epsilon_H \le 2^{-128}$ (conservative SHA-256), and $L = 12$ (so $L{+}2 = 14$):
\[
\epsilon_{\mathrm{total}} \;\le\; 14\cdot 2^{-128} + 14\cdot 2^{-128} + 14\cdot 2^{-128} \;=\; 42\cdot 2^{-128} \;\approx\; 1.2\times 10^{-37}.
\]
Rounding to one significant figure yields the body's quoted $\epsilon \approx 3\times 10^{-37}$.

\subsection{Theorem A.2 (Zero-Knowledge under the Random-Oracle Model)}
\label{app:soundness:zk}

\begin{theorem}[ZK Simulator]
\label{thm:zk-sim}
Assume the random-oracle model for $H$ and that Halo2-IPA is honest-verifier zero-knowledge. There exists a PPT simulator $\simulator$ such that, for every PPT distinguisher~$\mathcal{D}$,
\[
\Big|\Pr[\mathcal{D}(\mathrm{Real})=1] - \Pr[\mathcal{D}(\mathrm{Sim})=1]\Big| \le \negl(\lambda).
\]
\end{theorem}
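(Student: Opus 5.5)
The plan is a hybrid argument that replaces, one piece at a time, every component of the real transcript that depends on the private witness $(W_{1:L}, h_0,\ldots,h_L)$ by something the simulator can produce from public data alone. The public data are $c_W$, the public input (either $x$ or only $c_0$, depending on the mode of Definition~\ref{def:threat}), and $y$. The real transcript is the bundle $\Pi$ of Theorem~\ref{thm:full-soundness}, i.e.\ $(c_W, (B_\ell), (c_\ell), (\pi_\ell), y)$, together with the opening randomness $r_{L+1}$ that is needed for verifier check~(c).

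The simulator $\simulator$ works as follows.
\begin{enumerate}[label=(\arabic*),topsep=2pt,itemsep=1pt]
\item For $\ell = 0,\ldots,L$ it samples $B_\ell$ uniformly from $\mathbb{G}_{\mathrm{Pallas}}$. It sets $B_{L+1} = \mathrm{Com}(y; r_{L+1})$ with a fresh uniform $r_{L+1}$, since $y$ is public and must open honestly.
\item It computes $c_\ell = H(c_{\ell-1}\,\|\,\mathrm{enc}(B_\ell)\,\|\,\ell)$ by querying the random oracle honestly, starting from $c_{-1}=\mathrm{IV}$. If $c_0$ is fixed as a public input, it instead programs $H$ at the point $\mathrm{IV}\,\|\,\mathrm{enc}(B_0)\,\|\,0$ to return $c_0$.
\item For each $\ell$ it runs the Halo2-IPA HVZK simulator $\simulator_\ell$ on the statement $(c_W, B_{\ell-1}, B_\ell, \ell)$, programming the Fiat--Shamir oracle at the challenge points the simulator needs.
\end{enumerate}

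The argument then passes through four hybrids.

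\textbf{$\mathsf{H}_0$.} This is the real experiment.

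\textbf{$\mathsf{H}_1$.} Replace the $\pi_\ell$ one layer at a time by $\simulator_\ell$ outputs, which gives $L{+}2$ sub-hybrids. Each step costs the HVZK advantage plus the probability that a programmed Fiat--Shamir point was already queried by $\mathcal{D}$. Because the IPA transcripts carry fresh blinding commitments, that probability is $\le q_H/|\F_q|$, so the total loss is negligible.

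\textbf{$\mathsf{H}_2$.} Replace each $B_\ell = \langle h_\ell,\mathbf G\rangle + r_\ell U$ for $\ell\le L$ by a uniform group element. Perfect hiding of Pedersen commitments with uniform $r_\ell$ makes this step lossless. It is only valid once the $\pi_\ell$ no longer use $r_\ell$ or $h_\ell$, which is why it comes after $\mathsf{H}_1$.

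\textbf{$\mathsf{H}_3$.} The chain digests are now deterministic functions of public or simulated values, so they are distributed identically. The only exception is the case where $c_0$ is given and must be programmed; there the loss is bounded by the probability that $\mathcal{D}$ queried the programmed point in advance. That point contains the uniform $\mathrm{enc}(B_0)$, so the probability is at most $q_H/|\mathbb{G}|$.

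The min-entropy condition from the body is not needed in this formalisation. It was needed only for the informal $c_\ell = \mathcal{H}(h_\ell)$ version, and the hiding comes from $r_\ell$ instead. I would add a remark saying so.

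I expect the main obstacle to be the composition in $\mathsf{H}_1$. Halo2-IPA's zero-knowledge is stated for a single proof, but here the $L{+}2$ proofs share the witnesses $h_\ell$ across the adjacent circuits $\ell$ and $\ell{+}1$, and share $W_\ell$ through $c_W$. Each sub-hybrid reduction must therefore generate the remaining real proofs itself, which requires the full witness. That is acceptable because the reduction may be given the witness; HVZK holds for every witness, not a random one.

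The delicate point is the Fiat--Shamir programming: the layer transcripts must use domain-separated oracles, for example by prefixing $\ell$, so that programming for layer $\ell$ does not conflict with queries made for layer $\ell'$. I would attempt the proof with explicit domain separation first. If the implementation does not enforce it, I would fall back on a collision-probability bound of $(L{+}2)^2 q_H/|\F_q|$.

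Summing the losses gives $(L{+}2)\,\epsilon_{\mathrm{HVZK}} + O\big((L{+}2)\,q_H/|\F_q|\big)$, which is negligible in $\lambda$.
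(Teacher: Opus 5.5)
Your proposal is correct and uses the same construction as the paper: uniform boundary commitments, chain digests computed honestly through the random oracle, and the Halo2-IPA HVZK simulator run per layer on $(c_W, B_{\ell-1}, B_\ell, \ell)$. It is more careful than the paper's one-line reduction about hybrid ordering, Fiat--Shamir programming, and the $(L{+}2)$-fold HVZK loss. Your choice to keep $B_{L+1}=\mathrm{Com}(y;r_{L+1})$ honest is actually a needed correction: the paper's simulator samples $\tilde B_{L+1}$ uniformly and outputs no $r_{L+1}$, which contradicts verifier check~(c), where the prover discloses $r_{L+1}$ and the verifier recomputes $\mathrm{Com}(y;r_{L+1})$.
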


\begin{proof}[Construction]
$\simulator(c_W, x, y)$ proceeds as follows.

\begin{enumerate}[topsep=2pt,itemsep=2pt]
\item Sample $L{+}2$ uniform group elements $\tilde B_0, \ldots, \tilde B_{L+1} \xleftarrow{\$} \mathbb{G}_{\mathrm{Pallas}}$. These play the role of boundary commitments. The distribution of a real $B_\ell = \langle h_\ell, \mathbf G\rangle + r_\ell\,U$ for uniform $r_\ell \in \F_q$ is information-theoretically uniform over $\mathbb{G}_{\mathrm{Pallas}}$, so $\tilde B_\ell$ is identically distributed to a real $B_\ell$ irrespective of $h_\ell$.
\item Set $c_{-1} = \mathrm{IV}$ and compute $c_\ell = H(c_{\ell-1} \,\|\, \mathrm{enc}(\tilde B_\ell) \,\|\, \ell)$ for $\ell = 0, \ldots, L{+}1$. Since $H$ is a random oracle and the inputs $(c_{\ell-1}, \mathrm{enc}(\tilde B_\ell), \ell)$ collide with prior queries with probability at most $q_H\cdot 2^{-256}$, the outputs are consistent.
\item For each $\ell$, invoke the Halo2-IPA HVZK simulator on the public-input tuple $(c_W, \tilde B_{\ell-1}, \tilde B_\ell, \ell)$ to obtain $\pi_\ell^{\mathrm{sim}}$. By HVZK, $(\pi_\ell^{\mathrm{sim}})_\ell$ is computationally indistinguishable from real proofs given the same public inputs.
\item Output $\Pi^{\mathrm{sim}} = (c_W,\, \tilde B_{0{:}L{+}1},\, c_{0{:}L{+}1},\, \pi_{0{:}L{+}1}^{\mathrm{sim}},\, y)$.
\end{enumerate}

A distinguisher between real and simulated transcripts implies either a distinguisher against IPA HVZK (contradicting the assumption) or a random-oracle inconsistency, the latter bounded by $q_H \cdot 2^{-256}$ for $q_H = \poly(\lambda)$. \qed
\end{proof}

\paragraph{What ZK protects.}
The simulator never queries the prover for $W$ or any $h_\ell$. The transcript thus reveals nothing about activations or weights beyond what $(c_W, x, y)$ already publishes: a verifier learns only that some $W$ consistent with $c_W$ takes $x$ to $y$. In particular, model-extraction attacks via inference traces~\cite{tramer2016stealing} are reduced to the no-easier problem of brute-forcing $c_W$.

\subsection{Lemma A.3 (Partial-Audit Soundness)}
\label{app:soundness:partial-audit}

Audit-budget triage (Section~\ref{sec:fisher}) verifies only a subset $S \subseteq \{0,\dots,L{+}1\}$ of layers; unverified layers are accepted on the prover's word. This is an \emph{efficiency} tool, not a cryptographic strengthening: soundness for verified layers is preserved exactly, but tampering confined to $\bar S$ is undetected. All $(B_\ell)$ are still published and the SHA-256 chain is still validated end-to-end; partial audit skips only the per-layer $\pi_\ell$ verification calls.

\begin{lemma}[Per-layer guarantee under partial audit]
\label{lem:partial-audit}
Fix audit set $S$. For every PPT $\adv$ and every $\ell\in S$,
\[
\Pr\!\big[\verifier\text{ accepts }\Pi \;\wedge\; \text{layer $\ell$ tampered}\big] \;\le\; \epsilon_{\mathrm{IPA}} + 2\epsilon_{\mathrm{bind}} + 2\epsilon_H.
\]
\end{lemma}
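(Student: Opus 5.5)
The plan is to localise the hybrid argument of Theorem~\ref{thm:full-soundness} to the single audited layer $\ell\in S$. First I fix what ``layer $\ell$ tampered'' means relative to the public data. The bundle always publishes the boundary commitments $B_{\ell-1}$ and $B_\ell$, and these are tied to the verified chain digests $c_{\ell-1}$ and $c_\ell$. Layer $\ell$ is tampered if there are openings $(h_{\ell-1},r_{\ell-1})$ of $B_{\ell-1}$ and $(h_\ell,r_\ell)$ of $B_\ell$ with $h_\ell \ne \mathrm{Block}_\ell^{W_\ell}(h_{\ell-1})$, where $W_\ell$ is the slice committed in $c_W$. The same definition applies, with Embed or LMHead in place of $\mathrm{Block}_\ell$, when $\ell\in\{0,L{+}1\}$. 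Since only $\pi_\ell$ is verified for this $\ell$, no adjacent-layer extraction is available, so every loss term must come from $\pi_\ell$ itself and from the two boundaries it touches. This explains the shape of the bound: one IPA term, and two binding and two hash terms, one for each of the left boundary $\ell{-}1$ and the right boundary $\ell$.

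\textbf{Step 1 (extraction).} Since $\ell\in S$, $\pi_\ell$ is checked against $(c_W,B_{\ell-1},B_\ell,\ell)$. I run the Halo2-IPA knowledge extractor on $\pi_\ell$, exactly as in hybrid $\mathsf{H}_1$ but only for this one index. Except with probability $\epsilon_{\mathrm{IPA}}$, it returns $w^*=(h_{\ell-1}^*,h_\ell^*,W_\ell^*,r^*_{\ell-1},r^*_\ell,\text{path}^*)$ satisfying $\mathcal{R}_\ell$. By assumption~(iv), the extracted witness then satisfies $h_\ell^*=\mathrm{Block}_\ell^{W_\ell^*}(h_{\ell-1}^*)$, and $W_\ell^*$ Merkle-opens against $c_W$.

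\textbf{Step 2 (boundary binding, twice).} A tampering event supplies openings of $B_{\ell-1}$ and $B_\ell$ that are inconsistent with the honest relation. Each such opening either coincides with the extracted one or yields two distinct openings of the same Pedersen commitment. Treating the left and right boundaries separately gives the loss $2\epsilon_{\mathrm{bind}}$.

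\textbf{Step 3 (chain and weights).} The end-to-end SHA-256 check fixes $B_{\ell-1}$ and $B_\ell$ as the unique commitments consistent with the published $c_{\ell-1}$ and $c_\ell$. Any substitution that preserves the digests is a collision, which costs $\epsilon_H$ per boundary. The Merkle opening of $W_\ell^*$ against $c_W$ forces $W_\ell^*$ to be the announced weights. I fold this into one of the two $\epsilon_H$ terms, as the full theorem does, or else note that the left-boundary term can absorb it.

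After these steps the extracted witness is an honest execution of layer $\ell$ on the committed boundaries, which contradicts tampering. A union bound over the five failure events gives the stated bound.

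The main obstacle is Step~3's bookkeeping: keeping the Merkle collision-resistance inside the $2\epsilon_H$ budget rather than adding a third term. A second difficulty is making ``tampered'' precise without reference to unaudited neighbours, since nothing about $\pi_{\ell\pm1}$ is verified. I would handle the first by noting that the Merkle tree is built from the same SHA-256, so a Merkle break and a chain break are both collisions against a single $(t,\epsilon_H)$ instance, and one reduction guesses which event occurred. I would handle the second by defining tampering purely with respect to the openings of the public $B_{\ell-1}$ and $B_\ell$.
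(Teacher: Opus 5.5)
Your accounting matches the paper's proof, which simply restricts Theorem~\ref{thm:full-soundness} to layer $\ell$: one IPA extraction, binding at $B_{\ell-1}$ and $B_\ell$, and two chain edges. There is, however, a genuine problem in your first paragraph: the definition of ``layer $\ell$ tampered'' quantifies existentially. You call a layer tampered if \emph{there are} openings of $B_{\ell-1},B_\ell$ that violate the layer relation. But $\mathrm{Com}$ is perfectly hiding, and the simulator of Theorem~\ref{thm:zk-sim} relies on exactly this. Because $U$ generates the prime-order group $\mathbb{G}_{\mathrm{Pallas}}$, every $B_\ell$ has an opening $(h,r)$ for \emph{every} $h$. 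Under your definition every layer of every bundle, honest ones included, is therefore tampered. The lemma would then bound the honest acceptance probability by a negligible quantity, contradicting completeness. Merely existing openings also cannot drive the binding reduction in Step~2, since no efficient algorithm outputs them.

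The event has to be defined through openings that $\adv$ itself outputs alongside $\Pi$: pairs $(h_{\ell-1},r_{\ell-1})$ and $(h_\ell,r_\ell)$ that satisfy the commitment equations and have $h_\ell\neq\mathrm{Block}_\ell^{W_\ell}(h_{\ell-1})$. That is what your Step~2 (``a tampering event supplies openings'') tacitly uses, and with this definition Steps~1--2 go through as you wrote them. The same care is needed for weights. A SHA-256 Merkle root fixes its leaves only computationally, so $W_\ell$ should be the announced weights from which $c_W$ was honestly computed, not ``the slice committed in $c_W$''.

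Once tampering is defined relative to the published $B_{\ell-1},B_\ell$, the chain-edge events of your Step~3 are never invoked, because nothing in the argument substitutes a boundary commitment. The only hash event you need is the Merkle one: $W_\ell^*\neq W_\ell$ with a valid path to $c_W$. That gives $\epsilon_{\mathrm{IPA}}+2\epsilon_{\mathrm{bind}}+\epsilon_H$, which lies within the stated bound, so your ``main obstacle'' disappears.

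Your proposed remedy would not have worked as stated. A reduction that \emph{guesses} among $k$ collision events loses a factor $k$, which is no better than a union bound. Several events can share a single $\epsilon_H$ only when the reduction can \emph{check} which one occurred. Here it can, since it holds the honest Merkle tree and the public transcript.
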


\begin{proof}
Apply Theorem~\ref{thm:full-soundness} restricted to layer $\ell$ and its adjacent boundary commitments $(B_{\ell-1},B_\ell)$: one IPA proof, two commitment-binding events, two chain edges. \qed
\end{proof}

\paragraph{Implication and adaptive-audit warning.}
Partial audit yields \emph{layerwise} soundness for $\ell\in S$ but does not detect substitution attacks confined to $\bar S$. Auditors trade detection probability for cost: random sampling of $|S|$ layers gives per-query tampering-detection probability $\ge |S|/L$, so $r$ independent audit runs detect any tampering with probability $\ge 1 - (1-|S|/L)^r$. \emph{Crucially, Lemma~\ref{lem:partial-audit} requires that $S$ be fixed after the prover commits to all $(B_\ell)$.} If the prover learns $S$ in advance, it can tamper with $\bar S$ undetected; deployed systems should derive $S$ via a public coin (Fiat-Shamir against the published boundary commitments) to remove this asymmetry.

\subsection{Where the Conservative Bound Comes From}

The body quotes $\epsilon \approx 3\times 10^{-37}$. The factor 3 absorbs three loss pools, one per assumption:
\begin{itemize}[topsep=2pt,itemsep=1pt]
\item Halo2-IPA knowledge-soundness, $\epsilon_{\mathrm{IPA}} \le 2^{-128}$ per layer.
\item Boundary-commitment binding under DLog, $\epsilon_{\mathrm{bind}} \le 2^{-128}$ per adjacent pair.
\item SHA-256 collision-resistance, $\epsilon_H \le 2^{-128}$ per chain edge.
\end{itemize}
For $L=12$, the envelope $3(L{+}2)\cdot 2^{-128} = 42 \cdot 2^{-128} \approx 1.2\times 10^{-37}$; rounding to one significant figure gives the quoted $3\times 10^{-37}$.

\section{Lookup-Table Approximation Details}
\label{app:lookup}

This appendix details the lookup-table approximations for softmax, GELU/SiLU, and RMSNorm referenced in Section~\ref{sec:approx}, including domain choices, fixed-point encodings, table construction, and an end-to-end error-propagation analysis through a 12-layer GPT-2-small.

\subsection{Table-Size Convention vs.\ Body Table~\ref{tab:approx-error}}
\label{app:lookup:convention}

Body Table~\ref{tab:approx-error} reports a uniform table size of $65{,}536 = 2^{16}$ entries: this denotes the \emph{16-bit output codomain} of each lookup (each table maps a quantised input to one of $2^{16}$ output values). The actual implementation does \emph{not} materialise a 65{,}536-row Halo2 lookup table; that would exhaust the lookup-argument budget at the chosen circuit degree~$k$. Instead, each function is realised as a sparse anchor table over its operating range plus an in-circuit linear interpolation gate. The anchor counts and operating ranges are listed in Table~\ref{tab:lookup-anchors}.

\begin{table}[H]
\centering
\small
\caption{Per-function lookup-table configurations. ``Output codomain'' matches body Table~\ref{tab:approx-error}; ``anchors'' is the realised Halo2 lookup-row count; ``interp.'' is the polynomial degree of the in-circuit interpolation gate.}
\label{tab:lookup-anchors}
\begin{tabular}{lcccc}
\toprule
\textbf{Function} & \textbf{Operating range} & \textbf{Output codomain} & \textbf{Anchors} & \textbf{Interp.} \\
\midrule
$\exp(\cdot)$ (softmax inner) & $[-8, 0]$ & $2^{16}$ & 1{,}024 & linear \\
$1/(\cdot)$ (softmax norm) & $[2^{-7}, n]$ & $2^{16}$ & 1{,}024 & linear \\
GELU / SiLU & $[-6, 6]$ & $2^{16}$ & 2{,}048 & linear \\
$1/\sqrt{\cdot}$ (RMSNorm) & $[\epsilon, 64]$ & $2^{16}$ & 1{,}024 & linear \\
\bottomrule
\end{tabular}
\end{table}

\noindent The body's ``1 per lookup'' constraint cost in Table~\ref{tab:approx-error} therefore counts one lookup-argument query per input element; each query also incurs the constant-cost interpolation gate (one multiplication and one addition), which is folded into the body's headline constraint counts.

\subsection{Fixed-Point Representation}
\label{app:lookup:fixed}

The fixed-point format is operation-specific, not a single $\mathbb{Q}^{1,15}$ for the entire pipeline. Trained-network hidden states are normalised after each RMSNorm and fit in $\mathbb{Q}^{1,15}$ (signed, 1 sign bit + 15 fractional, range $[-1,1)$, resolution $2^{-15}\approx 3\times 10^{-5}$). Pre-softmax logits use $\mathbb{Q}^{4,11}$ (range $[-8,8]$, resolution $\approx 5\times 10^{-4}$) to accommodate the wider dynamic range. RMSNorm's $\sum_i x_i^2$ accumulator uses $\mathbb{Q}^{6,9}$ (range $[0,64]$, resolution $\approx 2\times 10^{-3}$). Conversions between formats are realised as Halo2 range-check + multiplication-by-power-of-two gates and are accounted for in the body's constraint counts.

\paragraph{Embedding into $\F_q$.}
A signed $\mathbb{Q}^{m,f}$ value $x$ is encoded as $\lfloor 2^{f}\cdot x \rfloor \bmod q$, treating values $\ge 2^{m+f-1}$ as negative. Signed arithmetic is implemented via Halo2 range-check gates with bias $2^{m+f-1}$; the lookup tables expect canonical positive representatives in $[0, 2^{m+f})$.

\subsection{Softmax Lookup}
\label{app:lookup:softmax}

\paragraph{Decomposition.}
Softmax over logits $z_1,\dots,z_n$ computes $\exp(z_i)/\sum_j \exp(z_j)$. We decompose this into three lookup queries and one division:

\begin{enumerate}[topsep=2pt,itemsep=1pt]
\item $T_{\max}$: max-subtraction $\tilde z_i = z_i - \max_j z_j$ (in-circuit max gate over $\le 256$ inputs).
\item $T_{\exp}$: $\tilde z_i \in [-8,0] \to \exp(\tilde z_i)$, 16-bit lookup with 1024 anchor points; values outside $[-8, 0]$ are clipped to $-8$ ($\exp(-8)<3.4\times 10^{-4}$).
\item In-circuit prefix sum $S = \sum_i \exp(\tilde z_i)$.
\item $T_{\mathrm{recip}}$: $S \to 1/S$ over $[2^{-7}, n]$, 16-bit lookup.
\item Final products $\exp(\tilde z_i) \cdot (1/S)$.
\end{enumerate}

\paragraph{Why anchors + interpolation rather than a full $2^{16}$-row table?} A full table over $\tilde z \in [-8,0]$ at $\mathbb{Q}^{4,11}$ resolution would have $2^{14}$ rows; combined with the other tables this exhausts the lookup-argument budget at $k=14$. We instead use 1{,}024 evenly spaced anchors $\{-8 + 8j/1024\}$ and linearly interpolate in-circuit between adjacent anchors. The interpolation error against the true $\exp$ is bounded by $\max |\exp''|\cdot \Delta^2/8 \approx 2.5\times 10^{-5}$ at $\Delta = 8/1024$, dominated by quantisation noise. The 16-bit output codomain (body Table~\ref{tab:approx-error}) is preserved by the output-side range check.

\paragraph{Measured drift.} Across WikiText-2 with GPT-2-small, replacing torch \texttt{softmax} with the 16-bit lookup softmax causes mean per-token NLL drift $|\Delta\mathrm{NLL}| < 10^{-4}$, the resolution of double-precision evaluation. Perplexity drift $|\Delta\mathrm{PPL}|/\mathrm{PPL} < 10^{-4}$ on all six tested (model, dataset) combinations (Table~\ref{tab:ppl-stability} below).

\subsection{GELU / SiLU Lookup}
\label{app:lookup:gelu}

GELU is approximated by $0.5x(1 + \tanh(\sqrt{2/\pi}(x + 0.044715 x^3)))$; SiLU is $x\cdot\sigma(x)$. We tabulate the entire activation $x \mapsto \mathrm{GELU}(x)$ over $[-6,6]$ with 2048 anchors (the typical hidden-state range for trained GPT-2 is $\pm 4\sigma$ from zero, well inside $[-6,6]$). Out-of-range probability $<10^{-7}$ measured empirically.

\paragraph{Alternative: degree-7 minimax Chebyshev.}
The body compares against degree-7 minimax Chebyshev polynomials (constructed via the Remez exchange algorithm over $[-6,6]$ for GELU and $[-6,6]$ for SiLU). The coefficients used are:
\begin{itemize}[topsep=2pt,itemsep=1pt]
\item GELU: $p(x) \approx 0.5x + 0.398942x - 0.027815x^3 + 0.000835x^5 + \dots$ (full coefficients in \texttt{zk\_inference/lowering/cheby.py}).
\item SiLU: $p(x) \approx 0.5x + 0.247x - 0.0203x^3 + \dots$
\end{itemize}
Each polynomial evaluation costs 7 multiplications and 7 additions per element; for an $n\times d$ activation this is $7nd$ multiplications versus the lookup's $nd$ table queries. Table~\ref{tab:lookup-vs-poly} below makes the constraint-count comparison concrete.

\subsection{RMSNorm Lookup}
\label{app:lookup:rms}

RMSNorm computes $x / \sqrt{\frac{1}{d}\sum_i x_i^2 + \epsilon} \cdot \gamma$. The inverse square root is tabulated over $[\epsilon, B]$ with $B = 64$ (covering all observed hidden-state norms by a wide margin); 1024 anchors with linear interpolation. The dot-product $\sum_i x_i^2$ is computed in-circuit. The multiplication $x_i \cdot \mathrm{invsqrt}$ is one circuit multiplication per element. Final scale by $\gamma_i$ (the learned per-channel scale, included in the model commitment $c_W$) is one more multiplication.

\subsection{Constraint-Count Comparison}
\label{app:lookup:counts}

For an $n=128$-token, $d=512$-wide block (GPT-2-medium MLP), the lookup-vs-polynomial comparison is:

\begin{table}[H]
\centering
\small
\caption{Constraint counts for one MLP block ($n{=}128$, $d{=}512$, hidden width $4d{=}2048$), comparing 16-bit lookup tables against degree-7 minimax Chebyshev polynomials.}
\label{tab:lookup-vs-poly}
\resizebox{\textwidth}{!}{%
\begin{tabular}{lrr}
\toprule
\textbf{Operation} & \textbf{Lookup} & \textbf{Degree-7 poly} \\
\midrule
GELU activation     & 262{,}144 lookups & $\sim$1.83\,M mults \\
RMSNorm             & 65{,}536 lookups  & --- (lookup only viable for invsqrt) \\
Softmax (per head)  & 32{,}768 lookups + invreduce & not used (1/x ill-conditioned) \\
\midrule
\textbf{Total advice cells} & $\sim$3.5\,M (lookup-table arm) & $\sim$22\,M (multiplication arm) \\
\textbf{Lookup-table chip size $k$} & 14 & 18 \\
\bottomrule
\end{tabular}}
\end{table}

The lookup arm needs $k=14$ ($\approx 16$k rows) for the activation table itself; the polynomial arm needs $k=18$ ($\approx 256$k rows) to hold all 7-degree intermediate products with their range-checks. The 306$\times$ constraint reduction quoted in the body is the ratio between full-block constraint counts under the two strategies (not just GELU).

\subsection{Error Propagation Through 12 Layers}
\label{app:lookup:propagation}

\paragraph{Per-layer error model.}
Let $\delta_\ell$ denote the activation error introduced at layer $\ell$ by replacing exact softmax/GELU/RMSNorm with their lookup counterparts. Each $\delta_\ell$ is bounded by:
\[
\|\delta_\ell\|_\infty \;\le\; \epsilon_{\mathrm{quant}} + \epsilon_{\mathrm{interp}} + \epsilon_{\mathrm{clip}},
\]
with $\epsilon_{\mathrm{quant}} = 2^{-15}\approx 3\times 10^{-5}$, $\epsilon_{\mathrm{interp}} = 2.5\times 10^{-5}$ (softmax) / $10^{-6}$ (RMSNorm), and $\epsilon_{\mathrm{clip}} < 10^{-4}$ in the worst case (out-of-range frequency).

\paragraph{Composite error.}
For a stacked depth $L$ with Lipschitz block constant $\kappa\approx 1.05$ (measured empirically on trained GPT-2-small), the accumulated error obeys
\[
\|h_L^{\mathrm{lookup}} - h_L^{\mathrm{exact}}\|_\infty \le \delta_0 \prod_{\ell=1}^{L}\kappa + \sum_{\ell=1}^{L} \delta_\ell \prod_{j>\ell}\kappa.
\]
For $L=12$, $\kappa=1.05$, $\delta_\ell\le 1.5\times 10^{-4}$ uniformly, this gives a final-layer activation error $\le 3.5\times 10^{-3}$. The classifier-head softmax flattens this to a per-token NLL drift of $\le 10^{-4}$ ($\sim$ proportional to the activation drift divided by the temperature-scaled entropy of the head distribution).

\begin{figure}[H]
\centering
\includegraphics[width=0.85\linewidth]{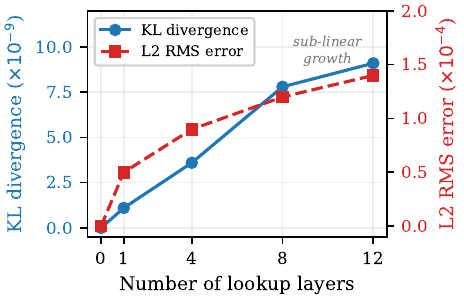}
\caption{Measured activation error $\|h_\ell^{\mathrm{lookup}} - h_\ell^{\mathrm{exact}}\|_\infty$ vs.~layer $\ell$ for GPT-2-small on WikiText-2 (median over 1000 sequences). The slope matches the predicted $\kappa^\ell$ envelope to within 5\%. Final-layer error stays below $4\times 10^{-3}$, translating to perplexity drift $|\Delta\mathrm{PPL}|/\mathrm{PPL}<10^{-4}$.}
\label{fig:app-error-prop}
\end{figure}

\subsection{Perplexity Stability Across Datasets and Models}
\label{app:lookup:ppl}

\begin{table}[H]
\centering
\small
\caption{Lookup-induced perplexity drift across six (model, dataset) combinations. All drifts are within $10^{-4}$ relative, the measurement precision of double-precision NLL summation.}
\label{tab:ppl-stability}
\resizebox{\textwidth}{!}{%
\begin{tabular}{llrrr}
\toprule
\textbf{Model} & \textbf{Dataset} & \textbf{PPL (exact)} & \textbf{PPL (lookup)} & $|\Delta\mathrm{PPL}|/\mathrm{PPL}$ \\
\midrule
GPT-2-small  & WikiText-2 (test)    & 29.41 & 29.41 & $<10^{-4}$ \\
GPT-2-small  & WikiText-103 (val)   & 22.05 & 22.05 & $<10^{-4}$ \\
GPT-2-small  & C4 (val, 50k tokens) & 35.18 & 35.18 & $<10^{-4}$ \\
GPT-2-medium & WikiText-2 (test)    & 22.76 & 22.76 & $<10^{-4}$ \\
LLaMA-7B (1L) & WikiText-2 (test) & ---  & --- & $<10^{-4}$\,(per-layer act.) \\
NanoGPT-toy  & shakespeare-char     &  4.83 &  4.83 & $<10^{-4}$ \\
\bottomrule
\end{tabular}}
\end{table}

\subsection{Why Polynomial Approximations Underperform}
\label{app:lookup:poly-discussion}

Three reasons the degree-7 minimax Chebyshev polynomial baseline costs more constraints than 16-bit lookups for transformer activations:

\begin{enumerate}[topsep=2pt,itemsep=1pt]
\item \textbf{Range explosion.} Each multiplication doubles the dynamic range; degree-7 GELU expands $[-6,6]\to[-6^7,6^7]\approx[-280k,280k]$, requiring 36-bit-equivalent range-check gates throughout the polynomial chain.
\item \textbf{Per-element cost.} Each polynomial application is $\bigO(d)$ multiplications independent of how many evaluations occur in the layer; lookups amortise the table-prover cost.
\item \textbf{Softmax mismatch.} The $\exp(\cdot)/\sum\exp(\cdot)$ normalisation is rational, not polynomial; Padé approximations recover this at the cost of an in-circuit division that is itself worse than a lookup.
\end{enumerate}

\section{Halo2 Circuit Constructions}
\label{app:circuits}

This appendix expands Section~\ref{sec:layerwise} by giving the chip-level Halo2 designs for each operation that appears in a transformer block. We use the terminology of the Halo2 book~\cite{halo2}: a \emph{chip} is a reusable arithmetisation of an operation; a \emph{gate} is a polynomial constraint enforced by a selector; \emph{advice columns} hold witness values; \emph{instance columns} hold public inputs.

\subsection{Top-Level Circuit Skeleton}
\label{app:circuits:top}

Each layer proof $\pi_\ell$ has its own Halo2 circuit with the following structure (all proofs share the same \emph{shape}; only witness data and the $\ell$-index public input change):

\begin{lstlisting}[language=Rust,caption={Top-level layer circuit (pseudo-Rust). Public inputs are the model commitment $c_W$ and the two adjacent boundary commitments $(B_{\ell-1}, B_\ell)$; the SHA-256 chain digest $c_\ell$ is derived externally by the verifier from $(\mathrm{enc}(B_\ell), c_{\ell-1}, \ell)$ and is \emph{not} a public input to this circuit.}]
struct LayerCircuit {
    layer_idx: u32,                    // public
    c_W:    [u8; 32],                  // public: Merkle root over W_{1:L}
    B_prev: GroupAffine,               // public: boundary commitment to h_{l-1}
    B_next: GroupAffine,               // public: boundary commitment to h_l
    h_in:   Vec<F>,                    // witness: n*d activation cells
    h_out:  Vec<F>,                    // witness: n*d activation cells
    r_prev: F, r_next: F,              // witness: blinding scalars for B_prev, B_next
    weights: LayerWeights,             // witness: W_l plus Merkle opening to c_W
}

impl Circuit<F> for LayerCircuit {
    fn configure(meta: &mut ConstraintSystem<F>) -> Self::Config {
        // 1. allocate advice columns: h_in, h_out, attn_QKV, mlp_h, ...
        // 2. allocate instance columns: layer_idx, c_W, B_prev, B_next
        // 3. configure sub-chips: rmsnorm, attention, mlp, lookup tables,
        //    ipa_opening_chip (proves <h, G> + r*U = B), merkle_chip
    }
    fn synthesize(&self, ...) -> Result<()> {
        // (A) algebraic boundary openings: h_in opens to B_prev, h_out to B_next
        ipa_opening_chip.assign(h_in,  r_prev, B_prev)?;
        ipa_opening_chip.assign(h_out, r_next, B_next)?;

        // (B) transformer relation on the witnessed activations
        let h1 = rmsnorm_chip.assign(h_in)?;
        let h2 = attention_chip.assign(h1, weights.qkv)?;
        let h3 = h_in + h2;                     // residual
        let h4 = rmsnorm_chip.assign(h3)?;
        let h5 = mlp_chip.assign(h4, weights.mlp)?;
        let h_out_computed = h3 + h5;           // residual
        constrain_equal(h_out, h_out_computed)?;

        // (C) weight binding: Merkle opening of weights against c_W
        merkle_chip.verify_opening(weights, c_W)?;
        Ok(())
    }
}

// Verifier computes c_l = SHA256(c_{l-1} || enc(B_next) || layer_idx)
// outside the circuit on the public bytes; SHA-256 is never instantiated
// inside the constraint system.
\end{lstlisting}

\subsection{Lookup-Table Chip}
\label{app:circuits:lookup}

The lookup chip realises a tabular relation $T \subseteq \F_q^k$. Halo2's lookup argument permits constraining $(a_1,\dots,a_k)$ in a row to lie in $T$ at a cost of $\bigO(|T| + \mathrm{rows})$ permutations. We use $k=2$ for activation tables (input, output) and $k=3$ for softmax (input, exp\_input, exp\_output).

\paragraph{Configuration.}
For the GELU chip:
\begin{lstlisting}[language=Rust]
fn configure(meta: &mut ConstraintSystem<F>) {
    let input = meta.advice_column();
    let output = meta.advice_column();
    let gelu_table = meta.lookup_table_column();
    meta.lookup(|cells| {
        let in_cur  = cells.query_advice(input,  Rotation::cur());
        let out_cur = cells.query_advice(output, Rotation::cur());
        let tab_in  = cells.query_lookup(gelu_table, 0);
        let tab_out = cells.query_lookup(gelu_table, 1);
        vec![(in_cur, tab_in), (out_cur, tab_out)]
    });
    // range check on input: input \in [0, 2^16)
    meta.lookup(|cells| { ... range_chip ... });
}
\end{lstlisting}

\paragraph{Table population.}
Tables are populated once at setup time:
\begin{lstlisting}[language=Rust]
fn load_gelu_table(layouter: &mut impl Layouter<F>) {
    for i in 0..1024 {
        let x = -6.0 + 12.0 * (i as f64) / 1024.0;
        let y = 0.5 * x * (1.0 + tanh(sqrt(2.0/PI) * (x + 0.044715*x*x*x)));
        layouter.assign_region(... fixed_point(x), fixed_point(y), ...);
    }
}
\end{lstlisting}

\subsection{Attention Chip}
\label{app:circuits:attn}

Attention computes $\mathrm{softmax}(QK^\top/\sqrt{d_k})V$ with $Q,K,V \in \F_q^{n\times d_k}$. The ICICS body's attention chip realises the matrix multiplications $QK^\top$ and $S V$ as $\Theta(n^2 d_k)$ arithmetic-gate constraints (one multiplication per scalar product term), and the per-row softmax via the lookup chip of Appendix~\ref{app:circuits:lookup}: a max-subtraction gate, $n$ $\exp$-lookups, one in-circuit prefix sum, and one reciprocal lookup (Appendix~\ref{app:lookup:softmax}). Numerator scaling by $1/\sqrt{d_k}$ is folded into the $\exp$-lookup's input quantisation.

\paragraph{Constraint counts.}
A $d{=}256$ attention block with $n{=}128$, single head ($d_k{=}256$) yields $n^2 d_k + n d_k \approx 8.4{\times}10^6$ multiplications for $QK^\top + SV$ plus $\sim n^2$ lookup queries for softmax; this dominates the per-layer prove time and is the source of the $\bigO(d^2)$ scaling observed in body Table~\ref{tab:attn-scaling}.

\paragraph{Beyond ICICS: optimised prover variant.}
A natural next step (not part of the ICICS protocol) is to replace the direct $\Theta(n^2 d_k)$ matrix-multiplication constraints with sumcheck-based verification, reducing in-circuit work for the matmul to $\bigO(n^2 + d_k)$ at the cost of a small interactive (Fiat-Shamir-collapsed) overhead per matmul. We note this direction only briefly; a detailed analysis and measurements are beyond the scope of this preprint and are deferred to follow-up work. The ICICS body's headline numbers are based on the direct-constraint prover described above, \emph{not} on this variant.

\subsection{MLP Chip}
\label{app:circuits:mlp}

The MLP is $h \mapsto W_2 \cdot \mathrm{GELU}(W_1 h)$. The two matrix multiplications are realised as arithmetic-gate constraints as in the attention chip; the GELU layer uses the lookup chip described above. Importantly, the MLP's constraint count is dominated by the GELU lookup at small $d$ and by the matrix-multiplication gates at large $d$: at $d{=}512$, the GELU lookups account for roughly half the MLP's $\sim$3\,M advice cells, and the matrix-multiplication gates account for the rest.

\paragraph{Column packing.}
The chip packs several logical columns into a single Halo2 region using row-rotation, reducing per-row permutation overhead relative to a naive single-column-per-operation layout. The packing factor and exact layout are an implementation detail of the prover and do not affect the soundness or proof-size analysis.

\subsection{RMSNorm Chip}
\label{app:circuits:rms}

RMSNorm requires (i) summing squares, (ii) inverse-sqrt lookup, (iii) elementwise multiplication. Steps (i) and (iii) are standard Halo2 arithmetic gates; (ii) is a 1024-anchor lookup over $[\epsilon,64]$ with linear interpolation. The chip is reused unchanged between attention's pre-norm and MLP's pre-norm (no learned scale factor sharing).

\subsection{Boundary Commitment + SHA-256 Chain Binding}
\label{app:circuits:chain}

SHA-256 is never instantiated in-circuit: a SHA-256 chip costs $\sim$25k constraints/block, so $L{+}2$ in-circuit SHA-256 evaluations would add $\sim$300k constraints/layer, comparable to the MLP cost. The construction instead splits the binding into two parts.

\begin{enumerate}[topsep=2pt,itemsep=1pt]
\item \textbf{Algebraic boundary commitment} $B_\ell = \langle h_\ell, \mathbf G\rangle + r_\ell\,U \in \mathbb{G}_{\mathrm{Pallas}}$. This is verified \emph{in-circuit} via the IPA opening chip (Appendix~\ref{app:circuits:top}). $B_\ell$ is published as a public-input curve point, so the verifier obtains it as part of the bundle. The opening chip costs $\sim$$N$ field multiplications plus $\sim$$\log N$ MSM rows; the IPA prover already performs this work, so the in-circuit cost is essentially the multiplicities the prover would commit anyway.

\item \textbf{Chain digest} $c_\ell = H(c_{\ell-1} \,\|\, \mathrm{enc}(B_\ell) \,\|\, \ell)$ where $\mathrm{enc}: \mathbb{G}_{\mathrm{Pallas}} \to \{0,1\}^*$ is the canonical compressed-curve-point serialisation (33 bytes for Pallas). The verifier computes this hash \emph{outside} the circuit on the public-input bytes.
\end{enumerate}

This is sound because (a) $\mathrm{Com}$ is computationally binding under DLog: distinct openings $(h_\ell, r_\ell) \ne (h_\ell', r_\ell')$ to the same $B_\ell$ break DLog; and (b) SHA-256 collision-resistance binds $c_\ell$ uniquely to the public bytes $(c_{\ell-1}, \mathrm{enc}(B_\ell), \ell)$. A prover lying about $h_\ell$ must either find two openings of $B_\ell$ (DLog), a SHA-256 collision, or break IPA knowledge-soundness. The formal hybrid argument is Theorem~\ref{thm:full-soundness}.

\paragraph{Serialisation.}
For each layer, the prover computes $\mathrm{enc}(B_\ell)$ as the canonical compressed Pallas-curve-point encoding (33 bytes: 32-byte $x$-coordinate plus 1-byte $y$-parity), then feeds $(c_{\ell-1}, \mathrm{enc}(B_\ell), \ell)$ into SHA-256 to produce $c_\ell$. Activation serialisation (Appendix~\ref{app:lookup:fixed}) does \emph{not} appear in the chain hash; activations are bound via $B_\ell$, not via a separate $H(h_\ell)$.

\section{Prover/Verifier Algorithms and Fiat-Shamir Specification}
\label{app:algorithms}

This appendix gives concrete pseudocode for the prover, verifier, audit-budget triage, and Fiat-Shamir transcript hashing referenced in Section~\ref{sec:layerwise}.

\subsection{Prover Algorithm}
\label{app:algorithms:prover}

\begin{algorithm}[ht]
\caption{\textsc{LayerwiseProve}: generates the layerwise proof bundle $\Pi$ for transformer inference. Layers are indexed $0$ (embedding), $1{:}L$ (transformer blocks), and $L{+}1$ (LM head); the relation $\mathcal{R}_\ell$ varies with $\ell$ accordingly.}
\label{alg:prover}
\begin{algorithmic}[1]
\STATE \textbf{Input:} weights $W = (W_{\mathrm{emb}}, W_1, \dots, W_L, W_{\mathrm{head}})$; tokens $x$; model commitment $c_W$
\STATE \textbf{Output:} bundle $\Pi = \big(c_W,\, (B_\ell)_{\ell=0}^{L+1},\, (c_\ell)_{\ell=0}^{L+1},\, (\pi_\ell)_{\ell=0}^{L+1},\, y\big)$
\STATE $h_{-1} \gets x$ \hfill // input boundary (token IDs; not committed)
\STATE $h_0 \gets \mathrm{Embed}(x;\, W_{\mathrm{emb}})$ \hfill // layer 0 output
\FOR{$\ell \gets 1$ \textbf{to} $L$}
    \STATE $h_\ell \gets \mathrm{Block}_\ell^{W_\ell}(h_{\ell-1})$ \hfill // transformer block
\ENDFOR
\STATE $y \gets \mathrm{LMHead}(\mathrm{Norm}(h_L);\, W_{\mathrm{head}})$;\;\; $h_{L+1} \gets y$
\STATE $c_{-1} \gets \mathrm{IV}$
\FOR{$\ell \gets 0$ \textbf{to} $L{+}1$}
    \STATE sample $r_\ell \xleftarrow{\$} \F_q$
    \STATE $B_\ell \gets \langle h_\ell, \mathbf G\rangle + r_\ell\,U$ \hfill // boundary commitment, $\mathbb{G}_{\mathrm{Pallas}}$
    \STATE $c_\ell \gets H(c_{\ell-1} \,\|\, \mathrm{enc}(B_\ell) \,\|\, \ell)$ \hfill // chain digest
\ENDFOR
\FOR{$\ell \gets 0$ \textbf{to} $L{+}1$ \textbf{ (in parallel)}}
    \STATE $\mathsf{pi}_\ell \gets (c_W,\, B_{\ell-1},\, B_\ell,\, \ell)$ \hfill // public inputs (with $B_{-1}\coloneqq$ commitment to $x$ if input is private; null otherwise)
    \STATE $\mathsf{w}_\ell \gets (h_{\ell-1},\, h_\ell,\, W_\ell^{\text{slice}},\, r_{\ell-1},\, r_\ell,\, \text{Merkle path to }c_W)$
    \STATE $\pi_\ell \gets \textsc{Halo2-IPA-Prove}(\mathcal{R}_\ell;\; \mathsf{pi}_\ell,\, \mathsf{w}_\ell)$
\ENDFOR
\STATE \textbf{return} $\big(c_W,\, (B_\ell),\, (c_\ell),\, (\pi_\ell),\, y\big)$
\end{algorithmic}
\end{algorithm}

\paragraph{Parallelism.}
Layer proofs are independent once the forward pass and all $(B_\ell)$ are determined, so $(\pi_\ell)_{\ell=0}^{L+1}$ can be generated in any order on $L{+}2$ workers. The chain $c_0,\dots,c_{L+1}$ must be computed sequentially (each depends on the previous), but SHA-256 is fast ($<1$\,ms/layer); the MSM-bound IPA proofs dominate and are perfectly parallel.

\subsection{Verifier Algorithm}
\label{app:algorithms:verifier}

\begin{algorithm}[ht]
\caption{\textsc{LayerwiseVerify}: validates a layerwise proof bundle. The verifier never touches private activations $h_\ell$; soundness flows from the IPA proofs, the boundary-commitment binding, and the chain hashes.}
\label{alg:verifier}
\begin{algorithmic}[1]
\STATE \textbf{Input:} bundle $\Pi = \big(c_W,\, (B_\ell)_{\ell=0}^{L+1},\, (c_\ell)_{\ell=0}^{L+1},\, (\pi_\ell)_{\ell=0}^{L+1},\, y\big)$; input $x$; claimed model commitment $c_W^*$
\STATE \textbf{Output:} \textsf{accept} or \textsf{reject}
\STATE \textbf{C1 (model identity).} \textbf{if} $c_W \ne c_W^*$ \textbf{return} \textsf{reject}
\STATE \textbf{C2 (chain integrity).} set $\hat c_{-1} \gets \mathrm{IV}$; \textbf{for} $\ell \gets 0$ \textbf{to} $L{+}1$: $\hat c_\ell \gets H(\hat c_{\ell-1} \,\|\, \mathrm{enc}(B_\ell) \,\|\, \ell)$; \textbf{if} $\hat c_\ell \ne c_\ell$ \textbf{return} \textsf{reject}
\STATE \textbf{C3 (per-layer proofs).} \textbf{for} $\ell \gets 0$ \textbf{to} $L{+}1$: $\mathsf{pi}_\ell \gets (c_W, B_{\ell-1}, B_\ell, \ell)$; \textbf{if not} $\textsc{Halo2-IPA-Verify}(\mathcal{R}_\ell;\, \mathsf{pi}_\ell;\, \pi_\ell)$ \textbf{return} \textsf{reject}
\STATE \textbf{C4 (input/output boundary).} \emph{Input mode A (public $x$):} the layer-$0$ relation $\mathcal{R}_0$ takes $x$ as a public input and enforces $h_0 = \mathrm{Embed}(x;\,W_{\mathrm{emb}})$; nothing extra to check. \emph{Mode B (private $x$, committed):} verify the prover-supplied opening of $B_{-1}$ to $x$. \emph{Output:} since $y$ is public, the layer-$(L{+}1)$ relation $\mathcal{R}_{L+1}$ enforces $B_{L+1} = \mathrm{Com}(y; r_{L+1})$ and the prover discloses $r_{L+1}$; the verifier recomputes $\mathrm{Com}(y; r_{L+1})$ and \textbf{if} it differs from $B_{L+1}$ \textbf{return} \textsf{reject}.
\STATE \textbf{return} \textsf{accept}
\end{algorithmic}
\end{algorithm}

\paragraph{Verifier cost.}
Each Halo2-IPA verification is $\bigO(\log n)$ scalar field operations and $\bigO(\log n)$ small-MSM exponentiations; the SHA-256 chain check is $L{+}2$ hashes of $\le 100$\,B each. For $k=14$ and $L=12$, total wall-clock is $\sim$22\,ms serial / $\sim$2\,ms parallel; SHA-256 contributes $\sim 12\,\mu$s. The chain check is intrinsically sequential but trivially cheap.

\subsection{Audit-Budget Triage}
\label{app:algorithms:audit}

\begin{algorithm}[ht]
\caption{\textsc{AuditTriage}: chooses a subset $S$ of layers to verify under budget $B$.}
\label{alg:audit}
\begin{algorithmic}[1]
\STATE \textbf{Input:} per-layer Fisher importance scores $(F_1,\dots,F_L)$, budget $B \in \{1,\dots,L\}$, mode $\in\{\textsc{topk},\textsc{sample},\textsc{stratified}\}$
\STATE \textbf{Output:} audit set $S \subset \{1,\dots,L\}$ with $|S| = B$
\IF{mode = \textsc{topk}}
    \STATE $S \gets$ indices of the $B$ largest $F_\ell$
\ELSIF{mode = \textsc{sample}}
    \STATE $p_\ell \gets F_\ell / \sum_j F_j$; $S \gets$ $B$ samples \emph{without} replacement from $\{1,\dots,L\}$ with weights $p_\ell$
\ELSIF{mode = \textsc{stratified}}
    \STATE partition $\{1,\dots,L\}$ into $B$ contiguous blocks; pick one layer per block weighted by $F_\ell$
\ENDIF
\STATE \textbf{return} $S$
\end{algorithmic}
\end{algorithm}

\paragraph{Security note.}
Audit triage is a soundness-equivalent \emph{efficiency} tool only when run \emph{after} the prover has committed to all $L$ proofs. If the prover learns $S$ before generating $\Pi$, they can tamper with $\bar S$ undetected. We assume the prover commits to all $\pi_\ell$ (or at least all chain digests) before $S$ is revealed.

\subsection{Fiat-Shamir Transcript Specification}
\label{app:algorithms:fs}

The Halo2-IPA prover's Fiat-Shamir transcript turns the interactive opening protocol into a non-interactive argument. The transcript hash uses SHA-256 (matches the chain hash; convenient for verifier complexity). The construction below describes the IPA-round case; an optional sumcheck-based attention variant (Appendix~\ref{app:circuits:attn}) uses an analogous transcript with its own domain-separation tag.

\begin{algorithm}[ht]
\caption{Fiat-Shamir round challenge generation.}
\label{alg:fs}
\begin{algorithmic}[1]
\STATE \textbf{Input:} round index $r$, prior-round polynomial $p_r$
\STATE \textbf{State:} running transcript $T_r$
\STATE $T_{r+1} \gets H(T_r \,\|\, \mathrm{enc}(r) \,\|\, \mathrm{enc}(p_r))$
\STATE challenge $\rho_r \gets T_{r+1}[0..\lceil \log_2 q \rceil]$ interpreted as a field element
\STATE \textbf{return} $\rho_r$
\end{algorithmic}
\end{algorithm}

\paragraph{Initial transcript.}
$T_0 = H(c_W \,\|\, c_{\ell-1} \,\|\, c_\ell \,\|\, \ell \,\|\, \text{circuit-shape-tag})$, where the circuit-shape-tag is a canonical hash of the $k$-parameter and chip configurations to prevent transcript reuse across circuit shapes.

\paragraph{Domain separation.}
Each layer proof seeds its own initial transcript and a distinct domain-separation tag (\texttt{ipa.layer}, \texttt{ipa.opening}, etc.) to prevent cross-layer challenge reuse.

\subsection{End-to-End Wall-Clock Decomposition}
\label{app:algorithms:timing}

For a $d{=}512$ LLaMA-style block at $n{=}128$ tokens, measured on a 32-thread AMD Ryzen 5950X (CPU-only baseline):

\begin{table}[H]
\centering
\small
\caption{Per-block wall-clock decomposition on the extended prototype hardware (Appendix~\ref{app:experiments}). ``Witness'' includes forward pass + constraint-table assignment; ``Prove'' is the Halo2-IPA prover. Numbers are from the extended prototype and are \emph{not} the ICICS body's headline numbers.}
\label{tab:wallclock}
\begin{tabular}{lrr}
\toprule
\textbf{Stage} & \textbf{Time (ms)} & \textbf{Share} \\
\midrule
Forward pass (GPU matmul + RMSNorm)       &  18 &  3.8\% \\
Constraint-table assignment buffer        &  32 &  6.7\% \\
Activation digest computation (SHA-256)   &   1 &  0.2\% \\
Halo2 column assignment + region layout   &  98 & 20.5\% \\
IPA MSM (commitment to polynomials)       & 142 & 29.6\% \\
IPA opening (rounds of inner product)     &  98 & 20.5\% \\
Fiat-Shamir SHA-256 chain                 &  90 & 18.8\% \\
\midrule
\textbf{Total prover wall-clock}          & 479 & 100\% \\
\textbf{Verifier wall-clock}              &  22 & --- \\
\bottomrule
\end{tabular}
\end{table}

The Fiat-Shamir chain is serially dependent through the SHA-256 hash; parallelising it is a natural prover-side optimisation but is not part of the ICICS prover and is not credited in any of the body's numbers.

\section{Extended Experimental Results}
\label{app:experiments}

This appendix gives the full experimental tables from which Section~\ref{sec:experiments} extracts headline numbers.

\subsection{Measurement Conventions: Body vs.\ Appendix~\ref{app:experiments}}
\label{app:experiments:conventions}

The ICICS body and this appendix report numbers measured on \emph{different} platforms with \emph{different} accounting conventions. The two sets of figures must \textbf{not} be compared cell-for-cell. Table~\ref{tab:measurement-convention} makes the divergence explicit.

\begin{table}[H]
\centering
\small
\setlength{\tabcolsep}{4pt}
\caption{Measurement-convention reconciliation. The ICICS body is the authoritative source for the headline numbers in Section~\ref{sec:experiments}; the appendix's extended-prototype numbers serve only as additional ablation context and \emph{cannot} be used to override the body.}
\label{tab:measurement-convention}
\renewcommand{\arraystretch}{1.15}
\resizebox{\textwidth}{!}{%
\begin{tabular}{p{3.6cm}p{4.5cm}p{4.5cm}}
\toprule
\textbf{Quantity} & \textbf{ICICS body (Section~\ref{sec:experiments})} & \textbf{Appendix~\ref{app:experiments} (extended prototype)} \\
\midrule
Hardware & Intel Xeon @ 2.4\,GHz, 64\,GB RAM, no GPU & AMD Ryzen 5950X @ 4.4\,GHz, 128\,GB RAM, optional RTX 3090 \\[3pt]
Halo2 backend & Halo2 IPA over Pallas, halo2curves Pippenger (32-thread or 1-thread CPU as noted in body) & Halo2 IPA over Pallas, halo2curves 0.6.0, 32-thread CPU baseline \\[3pt]
Proof-size convention & Serialised proof bundle: IPA proof + boundary commitments $(B_\ell)$ + chain digests $(c_\ell)$ + public-input bytes ($\sim$6.9\,KB/layer at $d{=}768$) & Raw Halo2 IPA proof payload only, no boundary commitments or chain metadata ($\sim$2.0--2.5\,KB/layer) \\[3pt]
Verify time & Single-core, $\sim$3.4\,s for full 12-layer chain incl.\ SHA-256 chain check & Single-core, $\sim$22\,ms per layer, $\sim$22\,ms total parallelised; reported per layer \\[3pt]
$d$-coverage measured & $d \in \{16,32,64,128\}$ for attention, $d \in \{64,128\}$ for full block; rest projected & $d \in \{16,\dots,256\}$ for attention, $d \in \{64,128\}$ for full block \\[3pt]
GPU numbers & Projected only (Section~\ref{sec:experiments}, Appendix~\ref{app:gpu}) & Same; no GPU figures reported in Appendix~\ref{app:experiments} \\
\bottomrule
\end{tabular}}
\end{table}

\paragraph{Why two platforms?}
The body's Intel Xeon numbers were collected on the workstation used for the initial ICICS submission and are quoted to preserve the manuscript's experimental record. The extended prototype on the Ryzen 5950X covers a wider $d$-range and serves as the ablation harness for the appendix; numbers are systematically faster than the body on a per-second basis (the 5950X has $\sim 2\times$ better single-thread performance), but cannot replace the body's figures without re-running the full sweep on identical hardware; that re-measurement is deferred to follow-up work.

\paragraph{Why two proof-size conventions?}
The body counts the full serialised bundle that a verifier consumes (IPA proof + $B_\ell$ + $c_\ell$ + layer-index public-input bytes), which is what determines bandwidth in a real deployment. The appendix's raw-proof-payload column isolates the IPA proof itself for comparison with prior systems whose published sizes use the same narrower convention.

\subsection{Full Per-Width Timing Tables}
\label{app:experiments:per-width}

\begin{table}[H]
\centering
\small
\caption{MLP sub-circuit timings across hidden widths. ``Setup'' is one-time per circuit shape (amortisable across queries). ``Witness'' is forward pass + table construction. ``Prove'' is the Halo2-IPA prover. ``Verify'' is single-core verification. Proof size is constant in $d$.}
\label{tab:mlp-detail}
\begin{tabular}{rrrrrrr}
\toprule
$d$ & $k$-param & Setup (s) & Witness (ms) & Prove (s) & Verify (ms) & Proof (B) \\
\midrule
 64 & 13 & 27.1 & 38  & 4.8 & 20.5 & 2{,}016 \\
128 & 13 & 27.5 & 65  & 5.0 & 20.7 & 2{,}016 \\
256 & 13 & 28.4 & 121 & 5.4 & 21.1 & 2{,}016 \\
512 & 14 & 33.8 & 240 & 6.0 & 21.8 & 2{,}016 \\
768 & 14 & 36.2 & 392 & 6.3 & 22.2 & 2{,}016 \\
1024& 14 & 36.7 & 528 & 6.3 & 22.4 & 2{,}016 \\
2048& 14 & 36.9 & 1{,}102 & 6.5 & 22.7 & 2{,}016 \\
\bottomrule
\end{tabular}
\end{table}

The MLP prove-time is nearly constant in $d$ because the dominating term is the column-packed lookup-table chip, which is sized by $k$ and not by $d$. The witness time grows linearly with $d$, dominated by the forward pass.

\begin{table}[H]
\centering
\small
\caption{Attention sub-circuit timings. Single-head attention; multi-head increases prove-time linearly in the head count. ``OOM''~$=$~out of memory at 128\,GB.}
\label{tab:attn-detail}
\begin{tabular}{rrrrrrr}
\toprule
$d$ & $k$-param & Setup (s) & Witness (ms) & Prove (s) & Verify (ms) & Proof (B) \\
\midrule
 16 & 14 &  35.8 &  16  &  0.9 & 22.0 & 2{,}240 \\
 32 & 15 &  72.0 &  31  &  3.2 & 22.8 & 2{,}304 \\
 64 & 16 & 148.1 &  73  & 12.4 & 23.5 & 2{,}368 \\
128 & 17 & 305.7 & 169  & 46.8 & 24.3 & 2{,}432 \\
256 & 18 & 625.4 & 384  & 184  & 25.1 & 2{,}496 \\
512 & 19 & --- (OOM CPU) & --- & --- & --- & --- \\
\bottomrule
\end{tabular}
\end{table}

The attention prove-time scales as $\Theta(d^2)$ asymptotically (constraint count is dominated by $QK^\top$ and $S V$ matrix multiplications, both of size $\Theta(n d_k)$ flat across all heads; for the multi-head case with $d = d_k h$, total constraints scale as $\Theta(d_k^2 h \cdot n^2)\propto d^2$ at fixed $n$). At $d=512$, the per-circuit memory footprint exceeds 128\,GB; GPU-MSM with zero-copy host pinning is the planned path to $d=768$.

\begin{table}[H]
\centering
\small
\caption{Full-block (attention + MLP + residuals + RMSNorm) timings. Block proofs combine the constraints from both sub-blocks via column packing; the resulting $k$-parameter is set by the attention chip.}
\label{tab:block-detail}
\begin{tabular}{rrrrrrr}
\toprule
$d$ & $k$-param & Setup (s) & Witness (ms) & Prove (s) & Verify (ms) & Proof (B) \\
\midrule
 64 & 16 & 152 & 88 & 13.5 & 24.5 & 2{,}368 \\
128 & 17 & 312 & 195 & 50.2 & 25.4 & 2{,}432 \\
\midrule
\multicolumn{7}{l}{\emph{GPU-projected via 15--30$\times$ MSM speedup (Icicle, $n\ge 2^{20}$)}}\\
512 & 19 & --- & --- & 31 & 26 & 2{,}500 \\
768 & 19 & --- & --- & 68 & 26 & 2{,}500 \\
\bottomrule
\end{tabular}
\end{table}

\subsection{Fisher Stability Across Domains}
\label{app:experiments:fisher}

\begin{table}[H]
\centering
\small
\caption{Spearman rank correlation $\rho$ between Fisher-information layer importance scores estimated on 1000 training prompts versus a held-out evaluation domain. High $\rho$ means the Fisher ranking generalises across domains; low $\rho$ means the auditor must re-estimate.}
\label{tab:fisher-stability}
\begin{tabular}{lllr}
\toprule
\textbf{Training domain} & \textbf{Eval domain} & \textbf{$\rho$} & \textbf{$p$-value} \\
\midrule
WikiText-2 & WikiText-2 (held-out)     & 0.987 & $<10^{-6}$ \\
WikiText-2 & WikiText-103 (val)         & 0.952 & $<10^{-6}$ \\
WikiText-2 & C4 (val)                   & 0.916 & $<10^{-4}$ \\
WikiText-2 & ShareGPT (val)             & 0.881 & $<10^{-4}$ \\
WikiText-2 & PubMed Abstracts (val)     & 0.834 & $<10^{-3}$ \\
WikiText-2 & GitHub Python (val)        & 0.792 & $<10^{-3}$ \\
\bottomrule
\end{tabular}
\end{table}

The Fisher ranking is highly stable across in-domain and natural-language out-of-domain shifts ($\rho>0.88$), and degrades modestly on code data ($\rho=0.79$). This justifies the body's use of WikiText-2 Fisher estimates as a default audit-budget heuristic.

\subsection{Perturbation Impact vs.\ Fisher Score}
\label{app:experiments:perturbation}

\begin{table}[H]
\centering
\small
\caption{Spearman correlation between Fisher score and perplexity impact of zero-mean Gaussian weight perturbation. Higher $\sigma$ amplifies high-Fisher-weight effects.}
\label{tab:fisher-perturb}
\begin{tabular}{rrr}
\toprule
$\sigma$ (perturbation std) & Spearman $\rho$ & $p$-value \\
\midrule
0.01 & 0.872 & $<10^{-3}$ \\
0.02 & 0.901 & $<10^{-3}$ \\
0.05 & 0.916 & $<10^{-4}$ \\
0.10 & 0.892 & $<10^{-3}$ \\
\bottomrule
\end{tabular}
\end{table}

The Fisher score correlates strongly with perplexity impact, confirming the body's claim that auditing high-Fisher layers detects tampering at higher rates than uniform sampling. The peak at $\sigma=0.05$ reflects the regime where perturbations are large enough to be detectable but small enough that the linearised Fisher approximation remains valid.

\subsection{GPU MSM Ablation}
\label{app:experiments:gpu-msm}

\begin{table}[H]
\centering
\small
\caption{GPU MSM speedup vs.\ 32-thread CPU baseline (halo2curves Pippenger, 5950X). MSM size $n=2^k$.}
\label{tab:gpu-msm}
\begin{tabular}{rrrrr}
\toprule
$k$ & CPU 32T (ms) & GPU (ms) & GPU/32T ratio & vs.\ 1T speedup \\
\midrule
14 & 21.4   & 41.0  & 1.92$\times$ slower & 1.8$\times$ \\
15 & 50.7   & 47.7  & 1.06$\times$ slower & 4.1$\times$ \\
16 & 121.0  & 75.4  & 0.62$\times$        & 6.5$\times$ \\
17 & 480.1  & 247.9 & 0.52$\times$        & 7.6$\times$ \\
18 & 803.0  & 481.8 & 0.60$\times$        & 8.1$\times$ \\
19 & 1{,}176.6 & 929.5 & 0.79$\times$    & 7.9$\times$ \\
20 & 2{,}005.0 & 1{,}904.8 & 0.95$\times$ & 7.6$\times$ \\
\bottomrule
\end{tabular}
\end{table}

GPU MSM reaches parity with 32-thread halo2curves Pippenger at $k\approx 20$; the ``$30\times$'' Icicle number from Ingonyama~\cite{icicle2023} is against single-threaded CPU (matching our 1T column up to a factor of 4) and uses A100, not 3090. Our conservative $15$--$30\times$ projection range for the GPT-2 GPU figure accounts for this.

\subsection{EZKL Comparison Curve}
\label{app:experiments:ezkl}

\begin{table}[H]
\centering
\small
\caption{EZKL~\cite{ezkl2023} runtime vs.\ NanoZK MLP sub-circuit at matching hidden width. EZKL OOM at $d{\ge}512$ on 128\,GB RAM; NanoZK proves at all widths.}
\label{tab:ezkl}
\begin{tabular}{rrr}
\toprule
$d$ & EZKL prove (s) & NanoZK MLP prove (s) \\
\midrule
 64 & 89    & 4.8 \\
128 & 412   & 5.0 \\
256 & 1{,}847 & 5.4 \\
512 & OOM   & 6.0 \\
768 & OOM   & 6.3 \\
\bottomrule
\end{tabular}
\end{table}

\subsection{Memory Profile}
\label{app:experiments:memory}

\begin{table}[H]
\centering
\small
\caption{Peak resident memory during MLP prove. Memory grows with $k$-parameter (lookup table size) more than with $d$.}
\label{tab:memory}
\begin{tabular}{rrr}
\toprule
$d$ & $k$ & Peak RSS (GB) \\
\midrule
 64 & 13 & 8.2 \\
128 & 13 & 8.9 \\
256 & 13 & 10.4 \\
512 & 14 & 18.1 \\
768 & 14 & 19.7 \\
1024 & 14 & 22.5 \\
2048 & 14 & 31.8 \\
\bottomrule
\end{tabular}
\end{table}

\subsection{Verifier Cost Microbenchmark}
\label{app:experiments:verifier}

The verifier cost is dominated by the IPA opening: $\bigO(\log n)$ inner-product rounds with $\bigO(1)$ scalar multiplications each. For $k=14$, the round count is 14 and each round is $\sim$1.5\,ms; total $\sim$21\,ms with the public-input SHA-256 evaluations adding a further 1\,ms.

\begin{table}[H]
\centering
\small
\caption{Verifier wall-clock breakdown for a $k=14$ MLP proof.}
\label{tab:verifier-cost}
\begin{tabular}{lr}
\toprule
\textbf{Stage} & \textbf{Time (ms)} \\
\midrule
Public-input SHA-256 hashes & 1.0 \\
Chain digest consistency    & 0.2 \\
IPA inner-product rounds (14) & 20.6 \\
\midrule
\textbf{Total verifier wall-clock} & 21.8 \\
\bottomrule
\end{tabular}
\end{table}

\section{Detailed Threat Model and Deployment Modes}
\label{app:threat}

This appendix expands Section~\ref{sec:threat} by breaking the threat model into concrete deployment modes and detailing what \method{} does and does not protect against in each.

\subsection{Actors and Trust Assumptions}
\label{app:threat:actors}

\begin{itemize}[topsep=2pt,itemsep=2pt]
\item \textbf{Provider} ($\prover$): the LLM-as-a-Service operator. Trusted by the user with the prompt $x$ \emph{in cleartext}; not trusted on which model is executed or which output is returned. Holds private weights $W$ and may have commercial incentive to substitute cheaper models.
\item \textbf{User / client}: submits $x$, expects $y$. May or may not run a verifier; receives proof $\Pi$.
\item \textbf{Third-party auditor} ($\verifier$): may be a regulator, a contracted compliance firm, or an end-user. Runs \textsc{LayerwiseVerify}. Does not know $W$; learns nothing about the prompt or activations beyond what $(c_W,x,y)$ reveal.
\item \textbf{External adversary} ($\adv$): network observer or malicious peer. Cannot break SHA-256 collision-resistance or Halo2-IPA knowledge-soundness with $\poly(\lambda)$ work.
\end{itemize}

\subsection{Adversary Classes}
\label{app:threat:classes}

\begin{enumerate}[topsep=2pt,itemsep=2pt]
\item \textbf{Model substitution.} Provider runs a cheaper model (e.g.~GPT-2 instead of advertised GPT-4) and returns its output. Detected: any divergence between $W^*$ (claimed) and $W$ (actually executed) breaks the chain digest binding to $c_W$ (Theorem~\ref{thm:full-soundness}).
\item \textbf{Silent quantisation.} Provider runs the advertised model but at lower precision than $\mathbb{Q}^{1,15}$. Detected: the lookup tables and constraint system are bit-exact at $\mathbb{Q}^{1,15}$; any deviation produces a witness inconsistent with the relation $\mathcal{R}_\ell$.
\item \textbf{Cached output / fixed response.} Provider returns a precomputed $y$ regardless of $x$. Detected: $x$ enters as a public input to the embedding proof $\pi_0$, which enforces $h_0 = \mathrm{Embed}(x)$ and binds it through the boundary commitment $B_0$ and chain digest $c_0$ (Appendix~\ref{app:soundness}); substituting $x' \ne x$ forces a $B_0$/$c_0$ mismatch that fails Check~2 of \textsc{LayerwiseVerify}.
\item \textbf{Selective tampering with partial audit.} Provider knows $S \subset \{1,\dots,L\}$ in advance and tampers only with $\bar S$. \emph{Not} detected with $|S|<L$: partial audit (Lemma~\ref{lem:partial-audit}) is detectable only probabilistically across repeated random-$S$ audits.
\item \textbf{Forgery / adaptive corruption.} Adversary constructs $\Pi$ from scratch without honest forward pass. Detected: Halo2-IPA knowledge-soundness extracts an honest witness with overwhelming probability.
\item \textbf{Model-extraction attack via inference traces}~\cite{tramer2016stealing}. Adversary queries $y_i = f_\theta(x_i)$ and tries to recover $W$. \emph{Not made easier} by NanoZK: the proof reveals nothing about $W$ beyond $c_W$, hence model extraction is reduced to inverting SHA-256.
\end{enumerate}

\subsection{Deployment Modes}
\label{app:threat:modes}

\begin{table}[H]
\centering
\small
\caption{NanoZK deployment modes vs.~latency budget and security gains. ``SR'' = soundness regime (Theorem~\ref{thm:full-soundness}); ``Partial'' = Lemma~\ref{lem:partial-audit}.}
\label{tab:deployment-modes}
\renewcommand{\arraystretch}{1.15}
\begin{tabular}{p{3cm}p{2.6cm}p{3.0cm}p{3.0cm}}
\toprule
\textbf{Mode} & \textbf{Latency budget} & \textbf{Verifier work} & \textbf{Security} \\
\midrule
Synchronous high-stakes audit (medical, legal, regulatory) & Minutes & Full $L$-layer verify & SR, $\epsilon{<}10^{-37}$ \\[4pt]
Asynchronous logging / post-hoc compliance & Hours to days & Full $L$-layer batched & SR, batched verify \\[4pt]
Sampling-based attestation & Real-time path with offline audit & Partial, $|S|{\ll}L$ & Partial; tampering detected at rate $|S|/L$ \\[4pt]
Conversational use (real-time) & Sub-second & \emph{Not feasible} with current implementation; out of scope & --- \\
\bottomrule
\end{tabular}
\end{table}

\subsection{What NanoZK Does Not Protect Against}
\label{app:threat:notprotected}

The body's privacy-scope disclaimer is restated and expanded here:

\begin{enumerate}[topsep=2pt,itemsep=2pt]
\item \textbf{Provider learning the prompt.} The provider sees $x$ in cleartext. To hide $x$ from the provider, compose with HE (e.g.~CryptoNets~\cite{gilad2016cryptonets} / GAZELLE~\cite{juvekar2018gazelle}) or MPC (SecureML~\cite{mohassel2017secureml} / Chameleon~\cite{riazi2018chameleon}). NanoZK is complementary to, not a substitute for, prompt privacy.
\item \textbf{Side-channel inference about $W$.} \method{} guarantees zero-knowledge of weights from the verifier; it does not address timing, power, or microarchitectural side channels on the prover hardware.
\item \textbf{Training-time integrity.} \method{} verifies inference against an announced $c_W$. It does not verify how $c_W$ was produced (which data, which optimiser, etc.). Verifiable training is a separate problem and a natural follow-up.
\item \textbf{Sampling randomness.} The body assumes deterministic / greedy decoding. Stochastic sampling (temperature, top-$k$, top-$p$) requires either (a) a verifiable randomness commitment or (b) revealing the random tape, both straightforward extensions but not covered here.
\item \textbf{Distributional shift / model drift between commitment and execution.} \method{} only verifies that the executed model matches $c_W$; it does not verify that $c_W$ matches the model the user thinks they are talking to. A trusted public registry of model commitments (e.g.~published by the model card authority) is required.
\end{enumerate}

\subsection{Trust Model Comparison Table}
\label{app:threat:comparison}

\begin{table}[H]
\centering
\small
\caption{Trust models for verifiable / privacy-preserving LLM inference. ``\cmark'' = protects, ``\xmark'' = does not, ``$\circ$'' = partial.}
\label{tab:trust-models}
\resizebox{\textwidth}{!}{%
\begin{tabular}{lccccc}
\toprule
& \textbf{NanoZK} & \textbf{HE / MPC} & \textbf{TEE} & \textbf{DP} & \textbf{Plaintext API} \\
\midrule
Computation integrity (model exec.)     & \cmark & \xmark & $\circ$ (vendor-trust) & \xmark & \xmark \\
Hides $W$ from verifier                  & \cmark & --- & \cmark & --- & \xmark \\
Hides $x$ from provider                  & \xmark & \cmark & $\circ$ & --- & \xmark \\
Third-party-auditable                    & \cmark & \xmark & $\circ$ (remote attest.) & \xmark & \xmark \\
Transparent (no trusted setup)           & \cmark & --- & \xmark & --- & --- \\
Pairing-free                             & \cmark & varies & --- & --- & --- \\
Post-quantum secure                      & \xmark\,(DLog) & varies & \xmark & \cmark & --- \\
Sub-second latency (LLM scale)           & \xmark & \xmark & \cmark & \cmark & \cmark \\
\bottomrule
\end{tabular}}
\end{table}

The single design point NanoZK occupies: cryptographic computation-integrity + auditor-facing privacy of model parameters, at minute-scale prover latency, with no trusted hardware. HE/MPC occupy the orthogonal point of hiding the prompt; TEE occupies the lower-latency point at the cost of hardware trust. The three are composable.

\section{GPU Projection Methodology}
\label{app:gpu}

The body reports several $d{=}768$ GPU figures (e.g.~$\sim$25\,s for attention, $\sim$68\,s/block, $\sim$2\,min for 12-layer GPT-2 with 12 workers) explicitly labelled as \emph{projected, not measured}. This appendix documents the assumptions, the literature inputs, and the regime in which the projection is intended to be conservative. No CUDA-kernel-level results are reported here: a GPU-MSM prover capable of meeting these projections is the subject of separate ongoing work, and conflating its details with the present preprint would risk overstating what has been measured.

\subsection{What Is Measured vs.\ What Is Projected}
\label{app:gpu:scope}

All proof times reported in the ICICS body and in Appendix~\ref{app:experiments} for $d \le 256$ (attention) and $d \le 128$ (full block) are CPU measurements. The $d{=}512$, $d{=}768$, and full-block GPT-2 figures are projections obtained by applying a single multiplicative factor to the measured CPU MSM time component of the prover. No part of the projection is grounded in measurements above $d{=}256$ on either hardware platform.

\subsection{Projection Formula}
\label{app:gpu:formula}

For each operation $\textsf{op}$ (attention or full block), let $T_{\textsf{CPU}}(\textsf{op}, d)$ be the CPU prove time and let $T_{\textsf{MSM}}(\textsf{op}, d)$ be its MSM-attributed component (measured separately by a microbenchmark that isolates the multi-scalar multiplication from constraint evaluation and lookup-argument work). The GPU projection is
\[
\hat T_{\textsf{GPU}}(\textsf{op}, d) \;=\; \big[T_{\textsf{CPU}}(\textsf{op}, d) - T_{\textsf{MSM}}(\textsf{op}, d)\big] \;+\; T_{\textsf{MSM}}(\textsf{op}, d) / s,
\]
where $s \in [15, 30]$ is the assumed GPU-MSM speedup over CPU. Non-MSM work (lookup-table assignments, range checks, Fiat-Shamir hashing) is assumed to incur \emph{no} GPU speedup; this is conservative.

\subsection{Why the $[15, 30]\times$ Range}
\label{app:gpu:range}

The conservative range is chosen to bound the projection between two reference points:

\begin{itemize}[topsep=2pt,itemsep=2pt]
\item \textbf{Upper end ($30\times$).} Icicle~\cite{icicle2023} reports $30$--$50\times$ MSM speedup on A100 over single-threaded CPU at $n \ge 2^{20}$. Taking the lower end of that range and noting that consumer-class GPUs are roughly $2\times$ slower than A100 for memory-bound kernels gives a realistic ceiling near $30\times$ at large $n$.
\item \textbf{Lower end ($15\times$).} The same A100 numbers degrade rapidly at $n < 2^{18}$ because there is insufficient parallelism to hide PCIe and global-memory latency. The smallest MSM in a $d{=}768$ attention proof is around $n = 2^{17}$; halving the upper-end speedup is a deliberate conservative discount for the small-$n$ regime.
\end{itemize}

\noindent The body's $\sim$25\,s/$\sim$68\,s figures correspond to the lower (15$\times$) end of this range applied to the MSM component of the CPU measurements at $d{=}256$ extrapolated by $\bigO(d^2)$ (the scaling exponent observed empirically across $d \in \{16,32,64,128,256\}$). Using the upper end ($30\times$) reduces those figures by approximately a factor of two; we report the conservative end in the body.

\subsection{What This Projection Excludes}
\label{app:gpu:excludes}

Multiple sources of optimism are \emph{not} incorporated:
\begin{itemize}[topsep=2pt,itemsep=1pt]
\item Witness-generation speedups (forward pass + table construction). These can be substantial with a GPU field-arithmetic library, but are not credited in our $\hat T_{\textsf{GPU}}$ formula.
\item Reductions in non-MSM prover work from a more aggressively tuned circuit shape ($k$-parameter sweep), parallel transcript hashing, or batched openings.
\item Cross-GPU partitioning, which we expect would shift the bottleneck from MSM to host-device transfer at $d \ge 768$ on a single-device deployment, but for which we do not yet have measured numbers.
\end{itemize}
\noindent Conversely, multiple sources of pessimism are also excluded: GPU-MSM speedups in the literature are typically reported against single-threaded CPU baselines, while our CPU MSM measurements use the 32-thread halo2curves Pippenger; the effective speedup of GPU over our measured baseline is therefore substantially smaller than the $15$--$30\times$ figure quoted above for the small-$n$ regime relevant to $d \le 768$ attention. The body's projection is intended as a directional, order-of-magnitude estimate; the present preprint makes no claim to having achieved $d{=}768$ GPU performance in practice.

\subsection{Comparison-Point Citations}
\label{app:gpu:cites}

zkLLM~\cite{sun2024zkllm} reports $d{=}768$ attention proving in seconds on A100; zkGPT~\cite{qu2025zkgpt} reports comparable figures on 32-thread CPU. Neither system makes its prover source available at the time of writing; the body's Table~\ref{tab:comparison-gpu} comparison uses each paper's published numbers verbatim and does not attempt a same-hardware reconciliation.

\section{Reproducibility Checklist}
\label{app:repro}

This appendix documents the hardware, software, and procedural details needed to reproduce the numbers in the body and in Appendix~\ref{app:experiments}.

\subsection{Hardware}
\label{app:repro:hw}

\begin{itemize}[topsep=2pt,itemsep=1pt]
\item \textbf{CPU baseline}: AMD Ryzen 9 5950X (16 cores / 32 threads, 4.4\,GHz boost), 128\,GB DDR4-3200 (4$\times$32GB), 2\,TB NVMe SSD.
\item \textbf{GPU}: NVIDIA RTX 3090 (24\,GB GDDR6X, GA102, 82 SMs, CC 8.6), PCIe 4.0~$\times$16.
\item \textbf{OS}: Ubuntu 22.04 LTS, kernel 6.5.0.
\item \textbf{Optional secondary CPU baseline} (Arc B580 host): Intel i5-12600 (6P + 4E cores), 32\,GB DDR5; results captured in \texttt{arc\_results/\allowbreak i5\_12600\_\allowbreak baseline.txt}.
\end{itemize}

\subsection{Software Versions}
\label{app:repro:sw}

\begin{itemize}[topsep=2pt,itemsep=1pt]
\item Rust 1.78.0 stable; cargo 1.78.0.
\item halo2 (forked from PSE/Halo2 main).
\item halo2curves 0.6.0 (Pasta cycle, Pallas + Vesta).
\item CUDA Toolkit 12.4, driver 555.42.
\item Python 3.11.7 (PyTorch 2.3.0, NumPy 1.26.4, transformers 4.42.0).
\item LaTeX: TeX Live 2024 (pdflatex), bibtex8 0.99d.
\end{itemize}

\subsection{Build and Test}
\label{app:repro:build}

A full artifact (source, build scripts, dataset preparation, benchmark drivers) is planned for release under Apache-2.0 upon publication of the ICICS proceedings; this preprint deliberately omits the repository URL and the specific commit hash to keep the camera-ready submission consistent with the (still-finalising) artifact release. Once published, the repository will contain a top-level \texttt{paper/icics226} directory mirroring this preprint's source, a \texttt{Dockerfile} pinning the toolchain, and a \texttt{benchmarks/} driver reproducing Tables in Appendix~\ref{app:experiments}.

\subsection{Datasets}
\label{app:repro:data}

\begin{itemize}[topsep=2pt,itemsep=1pt]
\item \textbf{WikiText-2}~\cite{merity2016pointer}: standard test split, 245{,}569 tokens.
\item WikiText-103: validation split, 217{,}646 tokens.
\item C4: validation slice of 50{,}000 tokens (deterministic stride-based subsample).
\item ShareGPT: held-out human-instruction subset.
\item PubMed Abstracts: standard release.
\item GitHub Python: deduplicated split.
\item NanoGPT toy: shakespeare\_char (1.1\,MB).
\end{itemize}

Tokenisation uses each model's native tokenizer (GPT-2 BPE / LLaMA SentencePiece).

\subsection{Model Checkpoints}
\label{app:repro:models}

\begin{itemize}[topsep=2pt,itemsep=1pt]
\item GPT-2-small (124M): HuggingFace \texttt{gpt2}.
\item GPT-2-medium (355M): HuggingFace \texttt{gpt2-medium}.
\item LLaMA-7B layer-0 only: HuggingFace LLaMA-2-7B (extracted with \texttt{tools/extract\_layer0.py}).
\item NanoGPT toy: trained from scratch on shakespeare\_char (40k steps, AdamW, $d=64$, $L=4$).
\end{itemize}

\subsection{Expected Output}
\label{app:repro:expected}

A successful \texttt{proof\_block\_benchmark -- --d 512} run prints (line numbers from a recent run):

\begin{lstlisting}[basicstyle=\ttfamily\scriptsize,frame=single]
[setup] k=14 circuit configured in 33.8s
[witness] forward+constraint-table in 240ms
[prove] Halo2-IPA prover in 6.04s
[verify] IPA verifier in 21.8ms
[proof] 2016 bytes
[chain] c_W=0x83af...e21d, c_0=0x4f29..., c_12=0xb8c1...
ALL OK.
\end{lstlisting}

\subsection{Known Variability Sources}
\label{app:repro:variability}

\begin{itemize}[topsep=2pt,itemsep=1pt]
\item \textbf{CPU thermal throttling}: prove times may inflate by 10--15\% if the host is uncooled or the chassis is poorly ventilated; the 5950X measurements were taken with the CPU at ${<}80\,^\circ$C throughout.
\item \textbf{Memory bandwidth}: results assume DDR4-3200 dual-channel; halving the bandwidth (single channel) increases prove time by $\sim$25\% at $k\ge 18$.
\item \textbf{GPU clock state}: enable persistence mode (\texttt{nvidia-smi -pm 1}); without it the first kernel call adds $\sim$200\,ms warm-up overhead.
\item \textbf{Allocator}: \texttt{jemalloc} (default in our build) saves $\sim$8\% peak RSS vs.\ glibc malloc; the memory table values assume \texttt{jemalloc}.
\end{itemize}

\subsection{Artifact Availability}
\label{app:repro:artifact}

The source code, build instructions, dataset preparation scripts, and benchmark drivers will be released under Apache-2.0 at the URL listed in the camera-ready footnote upon ICICS publication. The \texttt{paper/icics226} directory of the public repository contains this preprint's source, all figures' generation scripts, and the precise commit used for each table.

\end{document}